\documentclass[11pt]{article}
\usepackage[letterpaper, margin=1in]{geometry}

\usepackage{tikz}
\usetikzlibrary{positioning, shapes, fit, backgrounds, arrows.meta, calc}

\usepackage{microtype}
\usepackage{graphicx}
\usepackage{subcaption}
\usepackage{booktabs} 
\usepackage{hyperref}

\usepackage{amsmath}
\usepackage{amssymb}
\usepackage{mathtools}
\usepackage{amsthm}

\usepackage{algorithm}
\usepackage{algorithmic}

\usepackage[capitalize,noabbrev]{cleveref}
\usepackage{natbib}

\usepackage{dsfont}
\usepackage[mathscr]{euscript}
\usepackage{xcolor}
\usepackage{colortbl}
\usepackage{thmtools}
\usepackage{thm-restate}

\usepackage{multirow}
\usepackage{wrapfig}

\usepackage{pifont}

\usepackage{MnSymbol}
\DeclareMathAlphabet\mathbb{U}{msb}{m}{n}
\usepackage{xpatch}

\def\Rset{\mathbb{R}}

\let\Pr\undefined

\DeclareMathOperator*{\Pr}{\mathbb{P}}

\DeclareMathOperator*{\E}{\mathbb E}

\DeclareMathOperator*{\argmin}{argmin}

\DeclareMathOperator{\Reg}{\mathsf{Reg}}

\DeclareMathOperator{\supp}{supp}

\DeclarePairedDelimiter{\paren}{(}{)}
\DeclarePairedDelimiter{\norm}{\|}{\|}
\DeclarePairedDelimiter{\tri}{\langle}{\rangle}

\ExplSyntaxOn
\tl_const:Nn \c_my_uc_alphabet_tl { 
ABCDEFGHIJKLMNOPQRSTUVWXYZ }
\tl_const:Nn \c_my_full_alphabet_tl { ABCDEFGHIJKLMNOPQRSTUVWXYZ
  abcdefghijklmnopqrstuvwxyz }

\tl_map_inline:Nn \c_my_uc_alphabet_tl
 { \cs_gset:cpn { c#1 } { \mathcal{#1} } }

\tl_map_inline:Nn \c_my_uc_alphabet_tl
 { \cs_gset:cpn { s#1 } { \mathscr{#1} } }

\tl_map_inline:Nn \c_my_full_alphabet_tl
 {
  \str_if_eq:nnTF { #1 } { f }
    { 
    }
    {
      \cs_gset:cpn { b#1 } { \mathbf{#1} }
    }
  
  \cs_gset:cpn { sf#1 } { \mathsf{#1} }
 }
\ExplSyntaxOff

\newcommand{\btheta}{{\boldsymbol \theta}}

\newcommand{\h}{\widehat}

\newcommand{\wt}{\widetilde}

\newcommand{\ignore}[1]{}

\hypersetup{
  breaklinks   = true, 
  colorlinks   = true, 
  urlcolor     = blue, 
  linkcolor    = blue, 
  citecolor    = blue 
}

\usepackage[toc, page, header]{appendix}
\declaretheorem{theorem}
\newtheorem{lemma}[theorem]{Lemma}

\newtheorem{corollary}[theorem]{Corollary}
\newtheorem{definition}[theorem]{Definition}

\title{Beyond Binary: Continuous State Optimization \\with Graph-Structured Objectives}
\author{
  Corinna Cortes\\
  Google Research\\
  \texttt{corinna@google.com}
  \and
  Yishay Mansour\\
  Tel Aviv University \\
  \& Google Research\\
  \texttt{mansour.yishay@gmail.com}
  \and
  Mehryar Mohri\\
  Google Research\\
  \& Courant Institute\\
  \texttt{mohri@google.com}
}

\begin{document}

\maketitle

\begin{abstract}
  Large-scale learning systems often face the challenge of balancing multiple,
  potentially competing objectives, such as fairness, accuracy, and
  latency. While recent work has formalized this as an optimization problem over
  binary states, many real-world control parameters, such as fairness
  thresholds, diversity mixing rates, or resource budgets, are continuous. In
  this work, we extend the framework to \emph{continuous state spaces}. We model
  the problem as minimizing a sum of linear objectives subject to \emph{movement
    costs} that penalize system instability. We capture the local structure of
  the objectives using a \emph{dependency graph} (or factor graph), where each
  objective is determined by a subset of the state attributes. To address the
  tension between exploration and stability, we propose \emph{Lazy
    Graph-LinUCB}, an algorithm that performs lazy updates to minimize switching
  costs while maintaining near-optimal regret. Beyond stability, we introduce
  three advanced mechanisms to exploit the underlying graph structure: (1) an
  \emph{asynchronous} update schedule that eliminates synchronization overhead
  in sparse graphs; (2) an \emph{adaptive} algorithm that learns the graph
  structure from data; and (3) a \emph{joint estimator} that leverages data
  sharing among correlated objectives to significantly tighten regret
  bounds. Empirically, we demonstrate that these structural exploitations reduce
  movement costs by more than a factor of three in heterogeneous systems while
  maintaining similar cumulative losses.
\end{abstract}

\section{Introduction}
\label{sec:intro}

Modern machine learning systems are rarely optimized for a single objective. A
recommender system, for instance, must balance relevance against diversity,
revenue, latency, and various fairness constraints across demographic
groups. These objectives are often inherently competing; increasing the
diversity of recommendations may degrade relevance, while enforcing strict
fairness constraints may increase latency or reduce revenue. Addressing these
conflicts requires a framework for navigating the trade-offs in a data-driven
manner. Recent work by
\citet*{AwasthiCortesMansourMohri2024,AwasthiCortesMansourMohri2026} proposed a
model for optimizing such systems based on user feedback (complaints).  In their
model, the system configuration is represented as a state in a Markov Decision
Process (MDP), and the goal is to find a configuration that minimizes the
aggregate cost-weighted volume of complaints. A key structural insight of their
work was the use of a \emph{dependency graph}, encoding which criteria
are mutually incompatible, and \emph{correlation sets} to model local
dependencies between criteria, allowing
for efficient learning even when the number of criteria is large. However, a
significant limitation of this prior work is the assumption that system states
are \emph{binary}. Each criterion is modeled as either \emph{fixed} (satisfied) or
\emph{unfixed} (violated). While this simplifies the analysis, it fails to capture
the nuance of real-world hyperparameters. For example, fairness constraints
typically involve a threshold $\tau \in [0, 1]$ (e.g., the allowable gap in True
Positive Rates), where a strict setting yields different feedback dynamics than
a loose one. Similarly, diversity mixing rates and resource allocation budgets
(e.g., latency targets) are inherently continuous parameters rather than binary
switches.

In this paper, we propose a natural and powerful extension of the competing
objectives framework to \emph{continuous state spaces}.  We replace the binary
hypercube $\{0, 1\}^k$ with the continuous domain $[0, 1]^k$. To maintain
tractability, we generalize the concept of \emph{Correlation Sets} to
\emph{Graph-Structured Linear Function Approximation}. We assume the loss
function can be approximated by a linear model $\phi(\bs)^\top \btheta$, where
the feature map $\phi(\bs)$ respects the sparsity structure of the underlying
dependency graph. Furthermore, we address a critical aspect of continuous
control: stability. In the binary setting, switching a criterion incurred a
fixed \emph{fixing cost}. In the continuous setting, we model this as a
\emph{movement cost} proportional to the magnitude of the change
$\|\bs_t - \bs_{t-1}\|$. This introduces a non-trivial trade-off between
\emph{exploration} (moving the state to disjoint regions to learn the loss
landscape) and \emph{stability} (minimizing the cost of adjustment). Our work
connects the literature on multi-objective optimization with \emph{Linear
  Bandits} \citep{abbasi2011improved} and \emph{Bandits with Switching Costs}
\citep{cesa2013online, dekel2014bandits, arora2019bandits, amir2022better},
offering a unified theoretical framework for optimizing complex, continuous
parameter spaces under competing feedback. While our algorithms build on
established foundations—specifically the OFUL principle (Optimism in the Face of
Uncertainty for Linear bandits) \citep{abbasi2011improved} for bandits and Online
Gradient Descent for adversarial settings—our primary algorithmic contribution
lies in adapting these mechanisms to the graph-structured and
sparsity-constrained nature of the competing objectives problem.  In the
stochastic setting, \textsc{LazyGraphLinUCB} departs from standard lazy bandit
algorithms by leveraging a decomposed determinant trigger. While the base
algorithm uses synchronized updates, we also introduce an asynchronous variant
that monitors local information gain across the dependency graph, ensuring that
stability in one objective does not unnecessarily hinder exploration in disjoint
graph neighborhoods. In the adversarial setting, we re-purpose the \emph{virtual
iterate} technique not to minimize movement costs (which standard OGD already
handles), but to explicitly satisfy operational sparsity budgets, quantifying
the precise regret trade-off required to maintain a low-frequency update
schedule.

\textbf{Contributions.}
We make four primary contributions. First, we extend the feedback-driven
competing objectives framework from binary system configurations to
\emph{continuous state spaces}, modeling the problem as a graph-structured
linear bandit with movement costs. This allows us to capture realistic control
parameters such as fairness thresholds and resource budgets while rigorously
accounting for stability.  Second, we propose \textsc{LazyGraphLinUCB}, a
stochastic algorithm that uses a decomposed determinant trigger to minimize
switching costs while maintaining optimal prediction regret.  Third, we
introduce three advanced mechanisms to exploit the underlying graph structure:
an \emph{asynchronous} update schedule that eliminates synchronization penalties
in sparse graphs; an \emph{adaptive} routine that learns dependencies from data;
and a \emph{joint estimator} based on a Factor Graph decomposition that tightens
regret bounds in dense, correlated systems. Finally, we provide matching minimax
lower bounds and empirically demonstrate that our structural exploitations
reduce movement costs by more than a factor of three in heterogeneous graph
systems, and up to a factor of five in real-world single-objective tasks.

The rest of the paper is organized as follows. Section~\ref{sec:related}
discusses related work. Section~\ref{sec:setup} presents our continuous-state
formulation. Section~\ref{sec:algorithm} introduces the \textsc{LazyGraphLinUCB}
algorithm and its regret analysis.  Section~\ref{sec:exploit_graph_structure}
presents advanced strategies for exploiting graph structure, including
asynchronous updates and adaptive learning. Section~\ref{sec:adversarial}
outlines the extension to the adversarial setting. Section~\ref{sec:experiments}
provides numerical illustrations to validate the theoretical findings.

\section{Related Work}
\label{sec:related}

Our work sits at the intersection of three lines of research: competing
objectives optimization, linear bandits, and online optimization with movement
costs.

\textbf{Competing objectives and feedback-driven optimization.}
\citet*{AwasthiCortesMansourMohri2024,AwasthiCortesMansourMohri2026} introduced
a feedback-driven framework for optimizing multiple competing objectives,
modeled as an MDP over binary states using incompatibility graphs and
correlation sets. Our work extends this framework to \emph{continuous} state
spaces, replacing binary fixes with smooth parameter tuning. This extension is
motivated by the vast literature on multi-objective optimization
\citep{sener2018multi, shah2016pareto, marler2004survey} and by algorithmic
fairness, where criteria such as equal opportunity and calibration are
inherently in tension \citep{kleinberg2017, hardt2016equality} and fairness
thresholds are continuous parameters \citep{agarwal2018reductions,
  cotter2018training}. Unlike prior work on fairness in bandits
\citep{joseph2016fairness, gillen2018online,
  LiuRadanovicDimitrakakisMandalParkes2017} and on agnostic multi-objective
algorithms \citep{cortes2020agnostic}, our framework addresses stability
explicitly via movement costs and exploits graph sparsity to scale to many
criteria.

\textbf{Linear bandits.}  Our stochastic analysis builds on the OFUL framework
of \citet{abbasi2011improved} and the contextual bandit work of
\citet{li2010contextual}. Graph-structured bandits have been studied in the
feedback-graph setting \citep{alon2015online, arora2019bandits}, where the graph
determines which arm rewards are observed. Our dependency graph plays a
different, complementary role: it defines the \emph{feature structure} of the
loss function, enabling decomposed estimation (akin to spectral bandits
\citep{valko2014spectral} or high-dimensional bandit methods with compatibility
conditions \citep{bastani2020online}) and localized policy updates.

\textbf{Bandits and online learning with switching costs.}  The problem of
minimizing regret while limiting the frequency of state changes has been studied
for adversarial and stochastic bandits \citep{dekel2014bandits, amir2022better,
  cesa2013online, sherman2021lazy}. These works bound the \emph{number} of
switches; our movement cost instead penalizes the \emph{magnitude} of each
change, capturing the physical reality of continuous-parameter adjustment. In
the online convex optimization literature, smoothed or lazy variants have been
studied for switching costs \citep{chen2018smoothed, zhang2021continuous,
  goel2019online, andrew2013tale, kalai2005efficient}. Our Randomized Lazy OGD
(Section~\ref{sec:adversarial}) connects to this line of work, while our
stochastic \textsc{LazyGraphLinUCB} achieves \emph{data-dependent} laziness via
the determinant-doubling trigger, avoiding the fixed-probability randomization
required in adversarial settings.

An extended discussion of related work, covering the broader fairness and
multi-criteria optimization literature in greater depth, is given in
Appendix~\ref{app:extended_related}.

\section{Problem Setup}
\label{sec:setup}

We model the problem as an online optimization game over $T$ rounds.  At each
round $t$, the learner selects a continuous state vector $\bs_t \in [0, 1]^k$.

\textbf{Notation.} We denote the set of criteria (nodes) by
$\cV = \{1, \dots, k\}$. The system state is a vector $\bs \in \sS = [0,
1]^k$. We assume an underlying dependency graph $\cG = (\cV, E)$, where $\cN(i)$
denotes the neighborhood of $i$; see Figure~\ref{fig:graph_structure} for an example.

\textbf{Contrast with Binary MDP Models.}
It is important to distinguish our formulation from the binary state model of
\citet{AwasthiCortesMansourMohri2024}. In their work, the system is modeled as
an MDP where explicit ``fixing actions'' induce probabilistic transitions between
satisfied and unsatisfied states. In contrast, our continuous framework adopts a
``direct control'' perspective akin to Bandit Convex Optimization. We assume the
learner can directly select any configuration $\bs_t \in [0, 1]^k$, but this
control comes at the price of a \emph{movement cost} that penalizes
instability. This shift allows us to handle the infinite cardinality of the
continuous domain while rigorously modeling the ``cost of change'' that was
previously captured by atomic fixing actions. One could argue that the Bandit
model is actually more realistic than an MDP: If an engineer sets a threshold to
$0.7$, the system state for that criterion becomes $0.7$. The uncertainty is in
the response (user complaints), not in the actuation (whether the knob turns).
Since the state transitions are deterministic ($\bs_{t+1} = \bs_t$), the
complexity of reinforcement learning (planning over long horizons) vanishes. The
core difficulty here is not reaching a state, but identifying the optimal state
under uncertainty and movement constraints, which is exactly the domain of
bandits with switching costs.

\textbf{Background: LinUCB.} Our approach builds on the OFUL principle
\citep{abbasi2011improved}. For a standard linear bandit, one maintains a
regularized least-squares estimator
$\h \theta_t = \bV_t^{-1} \sum_{\tau=1}^t x_\tau y_\tau$ with covariance
$\bV_t = \lambda_{\mathrm{reg}} \bI + \sum_{\tau=1}^t x_\tau x_\tau^\top$, where
$\lambda_{\mathrm{reg}}$ is a positive initialization hyperparameter. The
confidence set is
$\cC_t = \{ \theta : \| \theta - \h \theta_t \|_{\bV_t} \le \beta_t \}$, where
$\beta_t = \wt O(\sqrt{d \log t})$ is the confidence radius (see
Lemma~\ref{lemma:confidence-ellipsoid}). We adapt this framework to the
multi-objective setting by maintaining $k$ local estimators.

\subsection{Graph-structured loss}

At round $t$, the learner selects $\bs_t$ and observes a loss vector
$\ell_t(\bs_t) \in \Rset^k$.  Let $\ell_t(\bs_t) \in \Rset^k$ be the vector of
observed losses at time $t$, where $\ell_{t,i}(\bs_t)$ is the loss associated
with criterion $i$. We assume the expected loss
$\mu_i(\bs) = \E[\ell_{t,i}(\bs)]$ admits a linear representation, but with
specific structural constraints imposed by the graph $\cG$.

\begin{definition}[Graph-Structured Features]
  \label{def:graph_struct}
  We say the loss is \emph{graph-structured} with respect to $\cG$ if the
  expected loss for criterion $i$ depends only on the state of its local
  neighborhood $\cN(i)$. Formally, there exist local feature maps
  $\phi_i\colon [0, 1]^{|\cN(i)|} \to \Rset^{d_i}$ and parameters
  $\btheta^*_i \in \Rset^{d_i}$ such that:
  \begin{equation}
    \label{eq:1}
    \mu_i(\bs) = \tri*{ \phi_i(\bs|_{\cN(i)}), \btheta^*_i }.
  \end{equation}
\end{definition}
This formulation aligns with spectral bandits where rewards are smooth over a
graph \citep{valko2014spectral}, though we rely on local sparsity rather than
global smoothness.
Here, $\bs|_{\cN(i)}$ denotes the restriction of the state vector $\bs$ to the
indices in $\cN(i)$. This assumption generalizes the ``Correlation Sets'' of
\citet{AwasthiCortesMansourMohri2024}. It implies that the loss for criterion
$i$ is insensitive to changes in criteria $j \notin \cN(i)$, effectively
reducing the dimensionality of the learning problem related to criterion $i$
from $k$ to the size of its local neighborhood (see
Figure~\ref{fig:graph_structure}). The total expected complaint loss at state
$\bs$ is the sum of individual criteria losses:
\begin{equation}
  L(\bs) = \sum_{i=1}^k \mu_i(\bs)
  = \sum_{i=1}^k \langle \phi_i(\bs|_{\cN(i)}), \btheta^*_i \rangle.
\end{equation}

We will assume that for all criteria $i$ and state $\bs \in \sS$, the feature
vectors are bounded in Euclidean norm: $\|\phi_i(\bs)\|_2 \le L$. Without loss
of generality, we assume features are scaled such that $L \le 1$ and
$\lambda_{\mathrm{reg}} \ge 1$, which implies
$\|\phi_i(\bs)\|_{\bV_{t, i}^{-1}} \le 1$ as required for the analysis.

\textbf{Social Welfare Interpretation.}
We can view this formulation as a cooperative multi-agent game (see, e.g.,
\citep{gentile2014online} for related online multi-agent learning). Each
criterion $i \in \cV$ acts as an agent trying to minimize its own loss
$\mu_i(\bs)$, which depends on the actions of its neighbors $\cN(i)$. The global
objective $L(\bs)$ then corresponds to maximizing the social welfare (minimizing
the total loss) of the system.

\textbf{Correlated Noise Structure.}
While our analysis assumes for simplicity that the noise terms are independent
across criteria, the graph structure implies a natural correlated noise
model. If we view noise as arising from latent attributes associated with each
node $j$ (e.g., user-specific variance), the realized noise for criterion $i$
can be modeled as $\eta_{t,i} = \sum_{j \in \cN(i)} \xi_{t,j}$, where
$\xi_{t,j}$ are independent noise components. Crucially, since the sum of
independent sub-Gaussian variables is itself sub-Gaussian, our regret analysis
remains valid under this model (with the variance proxy $\sigma^2$ scaling with
the local degree $\Delta$). Furthermore, such correlations would only strengthen
the case for the Joint Estimator (Section~\ref{sec:data_sharing}), which could
be extended to exploit the resulting non-diagonal noise covariance via
Generalized Least Squares.

\subsection{Movement costs}

In continuous control systems, rapid changes to parameters can be disruptive.
Thus, unlike standard bandits, changing the state is costly.  We model this via
a \emph{movement cost} (or switching cost) that penalizes the distance between
consecutive states.
\begin{equation}
  C_{\mathrm{move}}(\bs_{t-1}, \bs_t) = \lambda \| \bs_t - \bs_{t-1} \|_1,
\end{equation}
where $\lambda > 0$ is a regularization parameter controlling the penalty for
instability. We choose the $\ell_1$-norm to model the aggregate adjustment
effort, though $\ell_2$ or other norms are possible.

\subsection{Objective: regret minimization}

The goal of the learner is to minimize the cumulative regret, defined as the
difference between the learner's total cost (complaints + movement) and the cost
of the optimal \emph{static} state $\bs^*$.  Let
$\bs^* = \argmin_{\bs \in \sS} L(\bs)$. The regret over horizon $T$ is:
\begin{equation}
  \Reg_T = \sum_{t=1}^T \left( L(\bs_t) - L(\bs^*) \right)
  + \sum_{t=1}^T \lambda \| \bs_t - \bs_{t-1} \|_1.
\end{equation}
Note that the comparator is the optimal static state, which of course incurs
zero movement cost.

\subsection{Concrete examples of graph-structured bases}
\label{sec:examples}

To illustrate the power of this formulation, we consider two specific instances
of the feature map $\phi_i$.

\textbf{Example 1: Pairwise Quadratic Interactions.}
Consider a case where the loss for criterion $i$ depends quadratically on its
own setting and the settings of its neighbors. For $j \in \cN(i)$, let $s_j$
denote the $j$-th component of $\bs$. We can define the basis:
\begin{equation}
  \phi_i(\bs)
  = \left[ 1, s_i, s_i^2, \{s_j\}_{j \in \cN(i) \setminus \{i\}},
    \{s_i s_j\}_{j \in \cN(i) \setminus \{i\}} \right]^\top.
\end{equation}
This captures self-convexity (via $s_i^2$) and pairwise interference (via
$s_i s_j$), modeling scenarios where increasing the threshold of neighbor $j$
exacerbates the loss of $i$.

\textbf{Example 2: Radial Basis Functions (RBF).}
For highly non-linear dependencies, we can use kernel approximation. Let
$\bk_i(\cdot, \cdot)$ be a kernel defined over the subspace $[0,
1]^{|\cN(i)|}$. The feature map $\phi_i$ corresponds to the Random Fourier
Features (RFF) of this kernel. This allows the algorithm to learn arbitrary
smooth loss surfaces, provided they
respect the local graph structure.

\textbf{Connection to Feedback Graphs.}
Our formulation shares conceptual roots with the literature on \emph{Bandits
  with Feedback Graphs} \citep{alon2015online, arora2019bandits}. In that
setting, a graph defines which arms' rewards are observed when a specific arm is
played. In our continuous linear setting, the dependency graph $\cG$ plays an
analogous role by defining the \emph{information flow}: observing the loss
components at state $\bs_t$ updates the confidence ellipsoids only for the local
parameters $\{\theta^*_i\}_{i=1}^k$ associated with the active
neighborhoods. Unlike the standard feedback graph setting where information is
discrete (observing a neighbor's reward), our \emph{feedback} is algebraic:
measurements propagate information to other states via the shared linear
structure of the local feature maps.

\ignore{ \setlength{\intextsep}{-1pt} \setlength{\columnsep}{10pt}
  \begin{wrapfigure}{r}{0.4\columnwidth}
    \centering \resizebox{.2\textwidth}{!}{\begin{tikzpicture}[
    node distance=1.5cm,
    scale=0.85, transform shape,
    criterion/.style={circle, draw=black!80, thick, minimum size=0.9cm, fill=white, font=\bfseries},
    connection/.style={thick, draw=gray!60},
    highlight/.style={draw=red!90, fill=green!5, dashed, rounded corners=15pt, thick}
]

    \node[criterion, fill=blue!15] (i) at (0,0) {$i$};
    
    \node[criterion] (j) [above left=of i] {$j$};
    \node[criterion] (l) [above right=of i] {$l$};
    \node[criterion] (m) [below=of i] {$m$};
    
    \node[criterion, dashed, draw=gray, text=gray] (x) [left=of j] {$x$};
    \node[criterion, dashed, draw=gray, text=gray] (y) [right=of l] {$y$};

    \draw[connection] (i) -- (j);
    \draw[connection] (i) -- (l);
    \draw[connection] (i) -- (m);
    \draw[connection] (j) -- (l); 
    
    \draw[connection, dashed] (j) -- (x);
    \draw[connection, dashed] (l) -- (y);

    \begin{pgfonlayer}{background}
        \node[highlight, fit=(i) (j) (l) (m), inner sep=0.3cm] (neighborhood) {};
    \end{pgfonlayer}

    \node[red!90!black, font=\bfseries\small] at (neighborhood.north west) [xshift=0.5cm, yshift=0.2cm] {$\cN(i)$};

\end{tikzpicture}}
    \caption{Illustration of Graph-Structured Loss. The shaded region $\cN(i)$
      represents the local neighborhood of criterion $i$. The loss $\ell_{t,i}$
      depends strictly on the state parameters of nodes within this region
      ($\bs|_{\cN(i)}$). Changes to distant, non-neighboring nodes (such as $x$
      and $y$) do not affect the gradient for criterion $i$, enabling
      \textsc{LazyGraphLinUCB} to decompose the global optimization problem into
      local sub-problems.}
    \label{fig:graph_structure}
  \end{wrapfigure}
}
\begin{figure}[t]
  \centering
  \scalebox{1.5}{\begin{tikzpicture}[
    node distance=1.0cm,
    scale=0.5, transform shape,
    criterion/.style={circle, draw=black!80, thick, minimum size=0.9cm, fill=white, font=\bfseries},
    connection/.style={thick, draw=gray!60},
    highlight/.style={draw=red!90, fill=green!5, dashed, rounded corners=15pt, thick}
]

    \node[criterion, fill=blue!15] (i) at (0,0) {$i$};
    
    \node[criterion] (j) [above left=of i] {$j$};
    \node[criterion] (l) [above right=of i] {$l$};
    \node[criterion] (m) [below=of i] {$m$};
    
    \node[criterion, dashed, draw=gray, text=gray] (x) [left=of j] {$x$};
    \node[criterion, dashed, draw=gray, text=gray] (y) [right=of l] {$y$};

    \draw[connection] (i) -- (j);
    \draw[connection] (i) -- (l);
    \draw[connection] (i) -- (m);
    \draw[connection] (j) -- (l); 
    
    \draw[connection, dashed] (j) -- (x);
    \draw[connection, dashed] (l) -- (y);

    \begin{pgfonlayer}{background}
        \node[highlight, fit=(i) (j) (l) (m), inner sep=0.2cm] (neighborhood) {};
    \end{pgfonlayer}

    \node[red!90!black, font=\bfseries\small] at (neighborhood.south east) [xshift=0.3cm, yshift=0.2cm] {$\cN(i)$};

\end{tikzpicture}}
  \caption{Illustration of Graph-Structured Loss. The shaded region $\cN(i)$
    represents the local neighborhood of criterion $i$. The loss $\ell_{t,i}$
    depends strictly on the state parameters of nodes within this region
    ($\bs|_{\cN(i)}$). Changes to distant, non-neighboring nodes (such as $x$
    and $y$) do not affect the gradient for criterion $i$, enabling
    \textsc{LazyGraphLinUCB} to decompose the global optimization problem into
    local sub-problems.}
  \label{fig:graph_structure}
\end{figure}

\section{Algorithm: Lazy Graph-LinUCB}
\label{sec:algorithm}

We propose an algorithm, \textsc{LazyGraphLinUCB}, that efficiently handles the
trade-off between exploration and movement costs by using a lazy update
schedule. The algorithm maintains separate linear estimators for each criterion
but coordinates their updates to minimize global movement.

\subsection{Algorithm Description}

The algorithm, \textsc{LazyGraphLinUCB}
(Algorithm~\ref{alg:lazy_graph_linucb}), extends the standard LinUCB framework
to the loss minimization setting. While the original algorithm was named for the
\emph{Upper Confidence Bound} on rewards, we retain the name to denote the family
of algorithms based on optimism in the face of uncertainty. In our minimization
context, optimism corresponds to using a \emph{Lower Confidence Bound
  (LCB)}. The algorithm operates as follows:

1. Local Estimation: For each criterion $i \in [k]$, maintain a regularized
least-squares estimator for $\btheta^*_i$ using only the features
$\phi_i(\bs)$. Let $\bV_{t,i}$ be the covariance matrix and $\h \btheta_{t,i}$
the estimate at time $t$.

2. Global Optimistic Bound: Construct a global score for the loss function
$L(\bs)$. To ensure exploration, we use the Lower Confidence Bound (LCB),
defined as the estimated loss minus the exploration bonus:
\begin{equation}
  \text{LCB}_t(\bs) = \sum_{i=1}^k \left( \langle \phi_i(\bs), \h \btheta_{t,i} \rangle
    - \beta_{t,i} \| \phi_i(\bs) \|_{\bV_{t,i}^{-1}} \right).
\end{equation}
    
3. Lazy Updates: The algorithm maintains an active state $\bs_{\mathrm{active}}$. It only
re-solves the optimization problem when the determinant of the covariance matrix
for \emph{any} criterion has doubled significantly since the last update. If
$\max_{i} \det(\bV_{t,i}) > 2 \det(\bV_{\mathrm{last}, i})$, then
\begin{equation}
  \label{eq:10}
  \bs_{\mathrm{active}} \leftarrow \argmin_{\bs \in \sS} \text{LCB}_t(\bs).
\end{equation}
Note that the optimization in \eqref{eq:10} is non-convex in general. However,
in many practical settings (e.g., linear or quadratic bases over convex
domains), this optimization is convex or admits efficient approximate solvers.

\begin{algorithm}[t]
   \caption{Synchronous Lazy Graph-LinUCB}
   \label{alg:lazy_graph_linucb}
\begin{algorithmic}[1]
   \STATE \textbf{Input:} Dependency Graph $\cG$, Regularization $\lambda_{\mathrm{reg}}$, Confidence $\delta$.
   \STATE \textbf{Initialize:} For all $i \in [k]$: $\bV_{0,i} = \lambda_{\mathrm{reg}} \bI_{d_i}$, $\h \btheta_{0,i} = \mathbf{0}$, $\tau_{i} = 0$.
   \STATE \textbf{Initialize:} Active state $\bs_{\mathrm{active}} \in \sS$ (arbitrary).
   \FOR{$t=1$ to $T$}
       \STATE \textbf{Check Trigger:}
       \IF{$\exists i \in [k]$ such that $\det(\bV_{t-1, i}) > 2 \det(\bV_{\tau_i, i})$}
           \STATE \emph{// Sufficient information gained; update active policy}
           \STATE Calculate LCB score: $\text{LCB}_t(\bs) = \sum_{j=1}^k (\tri{\phi_j(\bs), \h \btheta_{t-1,j}} - \beta_{t-1,j} \|\phi_j(\bs)\|_{\bV_{t-1,j}^{-1}})$
           \STATE Update Policy: $\bs_{\mathrm{active}} \leftarrow \argmin_{\bs \in \sS} \text{LCB}_t(\bs)$
           \STATE Update Last-Sync Times: $\tau_j \leftarrow t-1$ for all $j \in [k]$
       \ENDIF
       \STATE \textbf{Play:} $\bs_t = \bs_{\mathrm{active}}$
       \STATE \textbf{Observe:} Loss vector $\ell_t(\bs_t)$
     
       \STATE \textbf{Update Estimates:}
       \FOR{$i=1$ to $k$}
           \STATE $\bV_{t, i} = \bV_{t-1, i} + \phi_i(\bs_t)\phi_i(\bs_t)^\top$
           \STATE $\h \btheta_{t, i} = \bV_{t, i}^{-1} \sum_{\tau=1}^t \phi_i(\bs_\tau) \ell_{\tau, i}$
       \ENDFOR
   \ENDFOR
\end{algorithmic}
\end{algorithm}

\textbf{Standard LinUCB and Movement Costs.}  One might ask why the standard
LinUCB algorithm cannot be applied directly to this setting. Recall that at
every round $t$, standard LinUCB selects the state
$\bs_t = \argmin_{\bs \in \sS} \text{LCB}_t(\bs)$.  Since the confidence
intervals and estimated parameters $\h \btheta_t$ evolve stochastically with
every new observation, the global minimizer of the LCB score can oscillate
rapidly. Consequently, standard LinUCB can incur a movement cost of $\Omega(T)$,
yielding linear total regret. To address this, our \textsc{LazyGraphLinUCB}
algorithm freezes the active policy, updating it only when the information gain
(measured by the determinant of the covariance matrix) is sufficient to justify
the cost of switching.

\subsection{Theoretical Analysis}

We now derive the regret bound for \textsc{LazyGraphLinUCB}. The analysis relies
on three key lemmas: establishing confidence ellipsoids, bounding the number of
policy updates (epochs), and bounding the prediction error incurred during the
\emph{lazy} frozen periods.

\subsubsection{Key Lemmas}

\textbf{Confidence Sets.}  We define the confidence sets using the standard
self-normalized martingale bounds. For a criterion $i$, define the regularized
least-squares estimator
$\h \btheta_{t,i} = \bV_{t,i}^{-1} \sum_{\tau=1}^t \phi_i(\bs_\tau) y_{\tau,i}$.

To analyze the regret, we rely on standard self-normalized martingale bounds
\citep{abbasi2011improved} to define confidence ellipsoids $\cC_t$ centered at
$\h \btheta_t$ (see Lemma~\ref{lemma:confidence-ellipsoid} in
Appendix~\ref{app:detailed_proofs}).

Let $\cE$ denote the \emph{good event} that the true parameter $\btheta^*_i$
lies within the confidence ellipsoid $\cC_{t,i}$ for all criteria $i \in [k]$
and all rounds $t \ge 1$.  By a union bound over all $i$ and $t$,
$\Pr(\cE) \ge 1 - \delta$. We condition the rest of the analysis on $\cE$.

\textbf{Movement Bounds.}  The algorithm updates $\bs_{\mathrm{active}}$ only when the
determinant of the covariance matrix doubles. Let $\tau_0, \dots, \tau_M$ be the
update times. We call $[\tau_j, \tau_{j+1}-1]$ the $j$-th \emph{epoch}.

\begin{restatable}[Bound on Number of Epochs]{lemma}{NumberEpochs}
  \label{lemma:number-epochs}
  The total number of updates $M$ is bounded by:
  \[
    M \le \sum_{i=1}^k d_i \log_2 \left( 1 + \frac{T L^2}{d_i
        \lambda_{\mathrm{reg}}} \right).
  \]
\end{restatable}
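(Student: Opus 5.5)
The proof is a determinant-potential argument, as in the rarely-switching analysis of \citet{abbasi2011improved}. Each update is charged to a criterion whose determinant has at least doubled since the last synchronization, and each criterion's determinant can only double a bounded number of times.

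\textbf{Step 1: an upper bound on each determinant.} Fix $i$. Since $\bV_{T,i} = \lambda_{\mathrm{reg}} \bI_{d_i} + \sum_{t\le T} \phi_i(\bs_t)\phi_i(\bs_t)^\top$ and $\|\phi_i\|_2 \le L$, we get $\mathrm{tr}(\bV_{T,i}) \le d_i \lambda_{\mathrm{reg}} + T L^2$. By AM--GM on the eigenvalues,
\[
\det(\bV_{T,i}) \le \left(\lambda_{\mathrm{reg}} + \frac{TL^2}{d_i}\right)^{d_i}.
\]
Dividing by $\det(\bV_{0,i}) = \lambda_{\mathrm{reg}}^{d_i}$ gives
\[
\log_2 \frac{\det \bV_{T,i}}{\det \bV_{0,i}} \le d_i \log_2\!\left(1 + \frac{TL^2}{d_i\lambda_{\mathrm{reg}}}\right).
\]
Determinants are also monotone nondecreasing in $t$, since each round adds a PSD rank-one matrix.

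\textbf{Step 2: charging each update to a criterion.} When an update is triggered at round $t$, some criterion $i$ satisfies $\det(\bV_{t-1,i}) > 2\det(\bV_{\tau_i,i})$, where $\tau_i$ is the previous synchronization time. Call $i$ the triggering criterion and charge the update to it. Because all $\tau_j$ are reset together, consecutive synchronization times $\tau_{(0)} < \tau_{(1)} < \dots$ are shared across criteria. For each fixed $i$, let $m_i$ be the number of updates charged to $i$.

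Across an update charged to $i$, the quantity $\log_2 \det \bV_{\cdot,i}$ rises by more than $1$ between consecutive synchronization times. Across all other updates it does not decrease, by monotonicity. The sync intervals are disjoint and nested in time, so telescoping gives
\[
m_i < \log_2 \frac{\det \bV_{T,i}}{\det \bV_{0,i}}.
\]

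\textbf{Step 3: summing.} Summing over $i$ and using $M = \sum_i m_i$ (each update charged exactly once) together with Step 1 gives the claimed bound.

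\textbf{Main obstacle.} The only delicate point is the bookkeeping in Step 2. Because the synchronization is global, one must check that a doubling in criterion $i$ is measured relative to the most recent global sync, not relative to $i$'s own last trigger. Monotonicity of the determinants handles this: the relevant windows for $i$ tile $[0,T]$, and the log-determinant telescopes over them.

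A minor point is the off-by-one: the trigger compares $\bV_{t-1,i}$, and after reset $\tau_i = t-1$. This is consistent with the windows above. Any rounding is absorbed, giving at most the floor of the bound, which is at most the stated quantity.
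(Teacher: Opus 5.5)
Your argument is correct and follows the same route as the paper. You bound $\det \bV_{T,i}$ via the trace--determinant (AM--GM) inequality, charge each update to a criterion whose determinant more than doubled since the last synchronization, and sum the per-criterion counts over $i$. Your explicit telescoping over the shared synchronization windows, using monotonicity of $t \mapsto \det \bV_{t,i}$, makes rigorous the step $2^{M_i} \le \det(\bV_{T,i})/\det(\bV_{0,i})$ that the paper simply asserts. One small fix: write $m_i \le$ rather than $m_i <$, so the degenerate case $m_i = 0$ with no determinant growth is covered.
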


\begin{proof}
  For a fixed criterion $i$, let $M_i$ be the number of updates triggered by
  this criterion. Each update occurs only when
  $\det(\bV_{t,i}) > 2 \det(\bV_{\mathrm{last},i})$.  The maximum possible determinant is
  bounded by the trace-determinant inequality:
  $\det(\bV_{T,i}) \le (\lambda_{\mathrm{reg}} + T L^2/d_i)^{d_i}$.  Since each
  update doubles the determinant, $2^{M_i} \le
  \det(\bV_{T,i})/\det(\bV_{0,i})$. Taking logs yields
  $M_i \le d_i \log_2 (1 + \frac{T L^2}{d_i \lambda_{\mathrm{reg}}})$. Summing
  over $k$ criteria gives the result.
\end{proof}

\textbf{Prediction Error.}  A key challenge in lazy algorithms is that we play
an action $\bs_t$ based on a \emph{stale} covariance matrix $\bV_{\tau(t)}$. We
must bound the error this staleness introduces.

\begin{restatable}[Bounded Error per Epoch]{lemma}{EpochBound}
  \label{lem:epoch_bound}
  Under the feature normalization assumption $\|\phi\|_{\bV^{-1}} \le 1$, for
  any epoch $j$, the sum of squared stale norms is bounded by:
  \[
    S_j = \sum_{t=\tau_j}^{\tau_{j+1}-1} \| \phi(\bs_t) \|_{\bV_{\tau_j}^{-1}}^2
    \le 3.
  \]
\end{restatable}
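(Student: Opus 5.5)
The plan is to work with one criterion $i$ at a time; the synchronous reset of all $\tau_j$ makes every criterion's epoch start at the same index. For a fixed $i$, write $x_t=\phi_i(\bs_t)$ and $D_t=\det(\bV_{t,i})$, and let $a_t=\|x_t\|^2_{\bV_{t-1,i}^{-1}}$. The argument combines two standard facts. The first is the matrix determinant lemma: $D_t=D_{t-1}(1+a_t)$, so $D_t-D_{t-1}=D_{t-1}a_t$. The second is the determinant-ratio inequality: if $A\succeq B\succ 0$, then $x^\top B^{-1}x\le \frac{\det A}{\det B}\,x^\top A^{-1}x$. We apply it with $A=\bV_{t-1,i}\succeq B=\bV_{\tau_j,i}$. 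Combining the two gives, for every round $t$ in the epoch with $t-1\ge\tau_j$,
\[
\|x_t\|^2_{\bV_{\tau_j,i}^{-1}}\;\le\;\frac{D_{t-1}}{D_{\tau_j}}\,a_t\;=\;\frac{D_t-D_{t-1}}{D_{\tau_j}} .
\]
So the stale norms are dominated by increments of the determinant, and their sum telescopes.

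Next I would split the epoch $[\tau_j,\tau_{j+1}-1]$ into three parts: a bulk of middle rounds, plus at most one boundary round at each end. The boundary rounds are the first round, where the index convention $V_{\tau_j}$ versus $V_{t-1}$ may be off by one, and the last round $\tau_{j+1}-1$, whose added feature may be exactly what pushes the determinant past the doubling threshold. For each boundary round I would use the crude bound $\|x_t\|^2_{\bV_{\tau_j}^{-1}}\le 1$, which is the stated normalization $\|\phi\|_{\bV^{-1}}\le 1$ (it holds since $\bV_{\tau_j}\succeq\lambda_{\mathrm{reg}}\bI$, $\lambda_{\mathrm{reg}}\ge 1$ and $L\le 1$). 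Together the boundary rounds contribute at most $2$.

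For the bulk rounds, summing the displayed inequality telescopes to $(D_{t^\star}-D_{\tau_j})/D_{\tau_j}$, where $t^\star$ is the last round covered by the bulk. The trigger did not fire at any round strictly inside the epoch, which means $\det(\bV_{t-1,i})\le 2\det(\bV_{\tau_j,i})$ for every such round; in particular $D_{t^\star}\le 2D_{\tau_j}$. The bulk sum is therefore at most $1$, and adding the boundary contribution gives $S_j\le 1+2=3$.

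The main obstacle is the index bookkeeping. I need to pin down exactly which determinant the non-firing of the trigger controls, namely $\det(\bV_{t-1})$ checked at round $t$. That determines which round's feature can break the doubling bound, and so which terms must be peeled off as boundary terms rather than telescoped. Once the convention is fixed as in Algorithm~\ref{alg:lazy_graph_linucb}, the telescoping bound and the two unit boundary terms give exactly the constant $3$. The same bound then holds for each criterion $i$ separately, which is how the lemma is used (with $\phi=\phi_i$).
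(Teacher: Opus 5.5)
Your argument is correct, but it takes a different route from the paper.

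\textbf{The paper's route.} It uses the fact that in the synchronous algorithm the active state is frozen during an epoch, so $\phi(\bs_t)=\phi_c$ is constant. The whole epoch then collapses into one rank-one update $\bV_{\mathrm{start}}+T_j\phi_c\phi_c^\top$. A single application of the matrix determinant lemma gives the exact identity $\det(\bV_{\mathrm{end}})=\det(\bV_{\mathrm{start}})(1+S_j)$. The doubling trigger, together with a factor-$2$ allowance for the final step's overshoot, then gives $1+S_j\le 4$.

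\textbf{Your route.} You run the standard rarely-switching telescoping argument. The determinant-ratio inequality for $\bV_{t-1}\succeq\bV_{\tau_j}$, combined with $D_t=D_{t-1}(1+a_t)$, dominates each stale norm by a determinant increment. Non-firing of the trigger caps the telescoped bulk at $1$, and the normalization pays at most $1$ for each of the two boundary rounds.

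\textbf{What each buys.} The paper's route is shorter and yields an identity rather than an inequality. However, it depends on feature constancy. It also absorbs the off-by-one between the lemma's range $[\tau_j,\tau_{j+1}-1]$ and Algorithm~\ref{alg:lazy_graph_linucb}'s convention $\tau\leftarrow t-1$ into the constant without comment.

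Your route never uses constancy, so it applies essentially verbatim to the asynchronous variant. There, neighbors' updates make $\phi_i(\bs_t)$ vary within criterion $i$'s local epoch, and the paper has to supply a separate eigenvalue argument in the proof of Theorem~\ref{th:async_regret}.

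Your explicit boundary bookkeeping is also robust to either reading of $\tau_j$. When the trigger compares against $\bV_{\tau_j-1}$ rather than $\bV_{\tau_j}$, monotonicity of the determinant still gives $D_{t-1}\le 2D_{\tau_j}$ on non-firing rounds. Epochs of one or two rounds merely leave the bulk empty.
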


\begin{proof}
  Let $\bV_{\mathrm{start}} = \bV_{\tau_j}$. During an epoch, the active policy is
  constant, so $\phi(\bs_t) = \phi_c$ is constant. The accumulated sum of
  squares is $S_j = T_j \|\phi_c\|_{\bV_{\mathrm{start}}^{-1}}^2$, where $T_j$ is the
  epoch length.  By the Matrix Determinant Lemma, the determinant at the end of
  the epoch is $\det(\bV_{\mathrm{end}}) = \det(\bV_{\mathrm{start}})(1 + S_j)$.  The epoch ends
  when the determinant doubles. Even accounting for the discrete step overshoot
  on the final update, the ratio $\det(\bV_{\mathrm{end}})/\det(\bV_{\mathrm{start}})$ is bounded
  by 4. Thus $1+S_j \le 4$, implying $S_j \le 3$.
\end{proof}


\subsubsection{Main Regret Guarantee}

We can now combine these ingredients to bound the total regret.

\begin{restatable}[Regret of Lazy Graph-LinUCB]{theorem}{RegretLazyGraphLinUCB}
  \label{th:regret-lazy-graph-LinUCB}
  Assume the movement cost is bounded by $\lambda$ per switch. Let $d_i$ be the
  dimension of the feature map for criterion $i$, and $d_{\max} = \max_i
  d_i$. The cumulative regret of \textsc{LazyGraphLinUCB} is bounded by:
  \begin{equation}
    \Reg_T
    \le \wt O\paren*{ \sqrt{T} \sum_{i=1}^k d_i + \lambda k^2 d_{\max} \log T }.
  \end{equation}
\end{restatable}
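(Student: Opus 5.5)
The plan is to split the regret into a prediction part and a movement part, $\Reg_T = R^{\mathrm{pred}}_T + R^{\mathrm{move}}_T$ with $R^{\mathrm{pred}}_T = \sum_t (L(\bs_t) - L(\bs^*))$ and $R^{\mathrm{move}}_T = \lambda \sum_t \|\bs_t - \bs_{t-1}\|_1$, and to bound each on the good event $\cE$. Throughout, $\tau(t)$ denotes the last update time before round $t$.

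\textbf{Movement.} The state changes only at update times, and each change costs at most $\lambda\|\bs_t - \bs_{t-1}\|_1 \le \lambda k$ because $\sS = [0,1]^k$. Hence $R^{\mathrm{move}}_T \le \lambda k M$. By Lemma~\ref{lemma:number-epochs}, $M \le \sum_i d_i \log_2(1 + TL^2/(d_i\lambda_{\mathrm{reg}})) \le k d_{\max} O(\log T)$. Together these give the claimed $\lambda k^2 d_{\max}\log T$ term. (If the statement's ``$\lambda$ per switch'' is meant literally, the bound is even smaller, $\lambda M$.)

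\textbf{Prediction, optimism step.} Fix a round $t$ in epoch $j$. Since $\bs_t$ minimizes $\mathrm{LCB}_{\tau(t)}$, and on $\cE$ we have $\mathrm{LCB}_{\tau(t)}(\bs^*) \le L(\bs^*)$, the usual optimism argument gives
\[
L(\bs_t) - L(\bs^*) \le L(\bs_t) - \mathrm{LCB}_{\tau(t)}(\bs_t) \le \sum_{i=1}^k 2\beta_{\tau(t),i}\,\|\phi_i(\bs_t)\|_{\bV_{\tau(t),i}^{-1}}.
\]
Both the estimate and the bonus are evaluated at the stale matrices $\bV_{\tau(t),i}$, so no inconsistency arises between the center and the width.

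\textbf{Prediction, summation step.} Summing over $t$, I would apply Cauchy--Schwarz separately for each criterion $i$:
\[
\sum_t \|\phi_i(\bs_t)\|_{\bV_{\tau(t),i}^{-1}} \le \sqrt{T \sum_t \|\phi_i(\bs_t)\|^2_{\bV_{\tau(t),i}^{-1}}}.
\]
This inner sum of stale squared norms is the main obstacle. Lemma~\ref{lem:epoch_bound} bounds it by $3$ per epoch, but an epoch-count argument alone would give $3M$, and $M$ involves all $k$ criteria, which loses a factor. My first approach is per criterion instead. Within an epoch, every criterion's determinant stays at most doubled, up to one step of overshoot bounded by the normalization $\|\phi\|_{\bV^{-1}}\le 1$. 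This is because the trigger fires as soon as any criterion doubles. Therefore $\bV_{t-1,i} \preceq 4\bV_{\tau(t),i}$ in the determinant sense. Using the standard fact that $\det A \le c\det B$ with $A \succeq B$ implies $x^\top B^{-1}x \le c\, x^\top A^{-1} x$, I get
\[
\|\phi_i(\bs_t)\|^2_{\bV_{\tau(t),i}^{-1}} \le 4\|\phi_i(\bs_t)\|^2_{\bV_{t-1,i}^{-1}}.
\]
The elliptical potential lemma then bounds $\sum_t \|\phi_i(\bs_t)\|^2_{\bV_{t-1,i}^{-1}}$ by $2 d_i \log(1 + T/(d_i\lambda_{\mathrm{reg}}))$. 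Combining this with $\beta_{t,i} = \wt O(\sqrt{d_i})$ from Lemma~\ref{lemma:confidence-ellipsoid}, the union bound over $i$ (which only adds $\log k$ factors), and summation over $i$, yields $R^{\mathrm{pred}}_T = \wt O(\sqrt T \sum_i d_i)$. Adding the movement bound completes the theorem.
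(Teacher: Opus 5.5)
Your proof is correct, and its summation step takes a different route from the paper's.

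\textbf{The paper's route.} The paper applies Cauchy--Schwarz epoch by epoch. It uses Lemma~\ref{lem:epoch_bound}: a constant feature within an epoch plus the rank-one determinant identity give $S_j\le 3$. It then bounds $\sum_j\sqrt{T_j}\le\sqrt{M_iT}$, where $M_i=\wt O(d_i)$ counts the updates triggered by criterion $i$. This route is self-contained, and it reuses the per-epoch lemma that the paper later extends to the asynchronous variant.

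\textbf{Your route.} You compare each stale norm to the fresh one via $\|x\|^2_{B^{-1}}\le\frac{\det A}{\det B}\|x\|^2_{A^{-1}}$ for $A\succeq B\succ 0$. You then apply the elliptical potential lemma per criterion, with a single global Cauchy--Schwarz.

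\textbf{Why your route is the stronger one.} The paper's argument, as written, has exactly the weakness you anticipated. In the synchronous algorithm every trigger resets all $\tau_j$. So criterion $i$'s reference matrix $\bV_{\tau(t),i}$ changes at all $M$ global epochs, not only at its own $M_i$. Counting those epochs honestly gives $\sqrt{3MT}$ per criterion. The prediction term is then of order $\sqrt{T}\sum_i\sqrt{d_i}\,\sqrt{\sum_j d_j}$, which equals $k^{3/2}d\sqrt{T}$ when all $d_i=d$ — a $\sqrt{k}$ loss. Your argument never counts epochs. Refreshes caused by other criteria only enlarge $\bV_{\tau(t),i}$, and the per-round determinant-ratio bound holds no matter which criterion triggered. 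This gives $\wt O(d_i\sqrt{T})$ per criterion, and hence the theorem as stated. It also does not need features to be constant within an epoch.

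\textbf{Two small remarks.} First, the trigger is checked on $\bV_{t-1,i}$ before round $t$ is played, so $\det\bV_{t-1,i}\le 2\det\bV_{\tau(t),i}$ holds exactly. Your overshoot factor $4$ can therefore be replaced by $2$. Second, like the paper, you implicitly treat the first epoch as starting from an LCB minimizer. Algorithm~\ref{alg:lazy_graph_linucb} instead initializes $\bs_{\mathrm{active}}$ arbitrarily, so optimism does not cover that epoch. An LCB solve at $t=1$ fixes this.
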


\begin{proof}
  The total regret decomposes into Prediction Regret and Movement Regret:
  \begin{equation}
    \label{eq:inst_regret}
    \Reg_T = \underbrace{\sum_{t=1}^T (L(\bs_t) - L(\bs^*))}_{R_{\mathrm{pred}}}
    + \underbrace{\sum_{t=1}^T \lambda \| \bs_t - \bs_{t-1} \|_1}_{R_{\mathrm{move}}}.
  \end{equation}

  \textbf{1. Movement Regret.} By Lemma~\ref{lemma:number-epochs}, the total
  number of switches $M$ is $\wt O(d_{\max} k \log T)$. Each switch incurs at
  most a cost of $\lambda k$ (since the domain is $[0,1]^k$). Thus,
  $R_{\mathrm{move}} \le \lambda k M = \wt O(\lambda k^2 d_{\max} \log T)$.

  \textbf{2. Prediction Regret.} Using standard optimistic arguments, the
  instantaneous regret is bounded by the confidence width evaluated at the
  \emph{stale} matrix:
  $r_t \le 2 \sum_{i=1}^k \beta_{T,i} \|\phi_i(\bs_t)\|_{\bV_{\tau(t),i}^{-1}}$.
  Summing over all epochs $j=1, \dots, M_i$ for criterion $i$:
  \begin{align*}
    \sum_{t=1}^T \| \phi_i(\bs_t) \|_{\bV_{\tau(t),i}^{-1}} 
    & \le \sum_{j=1}^{M_i} \sqrt{T_j \cdot S_j} \quad (\text{by Cauchy-Schwarz on epoch } j) \\
    & \le \sqrt{3} \sum_{j=1}^{M_i} \sqrt{T_j} \quad (\text{by Lemma~\ref{lem:epoch_bound}}).
  \end{align*}
  This sum is maximized when $T_j = T/M_i$, yielding a bound of
  $\sqrt{3 M_i T}$. Since $M_i = \wt O(d_i)$, and
  $\beta_{T,i} = \wt O(\sqrt{d_i})$, the total prediction regret scales as:
  \[
    R_{\mathrm{pred}} \le \sum_{i=1}^k \beta_{T,i} \sqrt{3 M_i T} = \wt O\left(
      \sqrt{T} \sum_{i=1}^k d_i \right).
  \]
  Adding the movement and prediction terms yields the final bound.
\end{proof}

\begin{restatable}[Robustness to Graph
  Misspecification]{corollary}{Misspecification}
  \label{cor:misspecification}
  Suppose the learner operates using a proxy graph $\cG'$ such that
  $E \subseteq E'$ (a super-graph). Let $d'_i$ denote the feature dimension
  induced by $\cG'$. Then, \textsc{LazyGraphLinUCB} remains valid and guarantees
  a regret bound scaling with the proxy dimensions:
  \begin{equation}
    \Reg_T
    \le \wt O\left( \sqrt{T} \sum_{i=1}^k d'_i + \lambda k^2 d'_{\max} \log T \right).
  \end{equation}
\end{restatable}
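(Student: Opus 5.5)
The plan is to reduce the corollary to Theorem~\ref{th:regret-lazy-graph-LinUCB} applied to the proxy graph $\cG'$. The key point is \emph{realizability}: the true model remains expressible in the larger feature space induced by $\cG'$. Since $E \subseteq E'$, every neighborhood satisfies $\cN(i) \subseteq \cN'(i)$. The feature map $\phi'_i$ built from $\cG'$ (for instance, the pairwise quadratic basis of Section~\ref{sec:examples}) then contains the original coordinates of $\phi_i$ as a sub-vector, possibly after reordering. Concretely, I will assume the natural nesting property that there is a coordinate projection $P_i$ with $\phi_i(\bs|_{\cN(i)}) = P_i \phi'_i(\bs|_{\cN'(i)})$. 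Setting $\btheta'^*_i = P_i^\top \btheta^*_i$, i.e.\ padding the true parameter with zeros on the extra coordinates, gives
\[
  \mu_i(\bs) = \tri*{\phi'_i(\bs|_{\cN'(i)}), \btheta'^*_i}.
\]
Hence the loss is graph-structured with respect to $\cG'$ in the sense of Definition~\ref{def:graph_struct}, with $\|\btheta'^*_i\|_2 = \|\btheta^*_i\|_2$.

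Next I check that the remaining hypotheses of the theorem transfer to the proxy model.
\begin{itemize}
  \item \textbf{Noise.} The noise is unchanged, because the observed losses $\ell_{t,i}$ do not depend on which graph the learner uses.
  \item \textbf{Confidence sets.} The confidence ellipsoid of Lemma~\ref{lemma:confidence-ellipsoid} applies verbatim in dimension $d'_i$. The radius $\beta'_{t,i} = \wt O(\sqrt{d'_i})$ still works, since the parameter norm bound is preserved.
  \item \textbf{Feature norms.} The bound $\|\phi'_i\|_2 \le L$ needs an assumption or a rescaling. With at most polynomially many extra bounded coordinates, this affects only constants and logarithmic factors. The normalization $L \le 1$, $\lambda_{\mathrm{reg}} \ge 1$ is then imposed as before.
\end{itemize}
On the good event $\cE$, $\btheta'^*_i$ lies in every proxy confidence set, so the optimism argument goes through. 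The reason is that $\mathrm{LCB}'_t$ lower-bounds $L(\bs)$ over all of $\sS$, and in particular at $\bs^*$.

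The bounds then follow by rerunning the earlier lemmas on the proxy features. Lemma~\ref{lemma:number-epochs} gives $M \le \sum_i d'_i \log_2(1 + TL^2/(d'_i\lambda_{\mathrm{reg}}))$, which is $\wt O(k\, d'_{\max}\log T)$. This yields a movement regret of at most $\lambda k M = \wt O(\lambda k^2 d'_{\max}\log T)$. Lemma~\ref{lem:epoch_bound} is dimension-free, so $S_j \le 3$ still holds. The prediction-regret computation of Theorem~\ref{th:regret-lazy-graph-LinUCB} then gives $\sum_i \beta'_{T,i}\sqrt{3M_iT} = \wt O(\sqrt T \sum_i d'_i)$. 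Adding the two terms gives the claimed bound.

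The main obstacle is the realizability step. Its validity rests on the feature construction being monotone in the neighborhood, meaning the basis for $\cN'(i)$ spans that for $\cN(i)$. This holds for the polynomial bases of Example~1. It holds for RBF/RFF features only approximately, because the kernel on a larger subspace restricted to functions of fewer coordinates is not exactly representable. I would state nestedness explicitly as the standing assumption. For the RFF case I would mention that an approximation error would add a misspecification term, which the corollary does not address.
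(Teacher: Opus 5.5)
Your proposal is correct and follows essentially the same route as the paper: realizability in the proxy feature space by zero-padding $\btheta^*_i$, followed by re-invoking Lemma~\ref{lemma:confidence-ellipsoid} with radius $\wt O(\sqrt{d'_i})$ and Lemma~\ref{lemma:number-epochs} with $d'_i$ in place of $d_i$. Your explicit nestedness hypothesis $\phi_i = P_i \phi'_i$, together with your remark that it holds only approximately for RFF features, makes precise what the paper's proof simply asserts when it claims the proxy features span the true ones.
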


\begin{proof}
  The proof relies on the fact that any linear model defined on a graph $\cG$ is
  also realizable on any super-graph $\cG'$. By
  Definition~\ref{def:graph_struct}, the true expected loss is
  $\mu_i(\bs) = \tri{ \phi_i(\bs|_{\cN(i)}), \btheta^*_i }$ for some
  $\btheta^*_i \in \Rset^{d_i}$. Since $E \subseteq E'$, we have
  $\cN(i) \subseteq \cN'(i)$. The features constructed for the proxy graph,
  $\phi'_i(\bs|_{\cN'(i)})$, span a space $\Rset^{d'_i}$ that contains the
  subspace spanned by the true features $\phi_i$.  Formally, there exists a
  linear embedding such that the true parameter $\btheta^*_i$ can be represented
  as a vector $\wt \btheta_i \in \Rset^{d'_i}$ (essentially padding the extra
  dimensions with zeros), satisfying
  $\tri{ \phi'_i(\bs), \wt \btheta_i } = \mu_i(\bs)$ for all
  $\bs$. Consequently, the problem remains a valid linear bandit instance with
  dimension $d'_i$.

  The regret bound of Theorem~\ref{th:regret-lazy-graph-LinUCB} depends on the
  problem dimension through two terms: the confidence radius $\beta_{t,i}$
  (Lemma~\ref{lemma:confidence-ellipsoid}) and the number of updates $M$
  (Lemma~\ref{lemma:number-epochs}).

  \begin{enumerate}

  \item Validity of Confidence Sets: Since the problem is realizable in
    $\Rset^{d'_i}$, Lemma~\ref{lemma:confidence-ellipsoid} applies to the proxy
    estimator $\h \btheta'_{t,i}$, ensuring that with high probability,
    $\wt \btheta_i$ lies within the confidence ellipsoid defined by the proxy
    covariance matrix $\bV'_{t,i}$. The radius $\beta'_{t,i}$ now scales with
    $\sqrt{d'_i}$ instead of $\sqrt{d_i}$.

  \item Update Bound: Lemma~\ref{lemma:number-epochs} bounds the number of
    determinant doublings for a matrix of dimension $d$. Applying this to the
    $d'_i$-dimensional proxy covariance matrices yields a total update bound
    $M \le \sum_{i=1}^k d'_i \log \left( 1 + \frac{T L^2}{d'_i
        \lambda_{\mathrm{reg}}} \right)$.
  \end{enumerate}
  Substituting the proxy dimension $d'_i$ into the regret decomposition
  \eqref{eq:inst_regret} yields the stated bound. This confirms that the
  algorithm is safe: over-estimating the graph structure merely increases the
  regret bound by a factor depending on the dimension inflation (scaling with
  $d'_i/d_i$ in the prediction term and $d'_i/d_i$ in the movement term),
  without introducing bias.
\end{proof}

\subsection{Minimax Lower Bound}

We now present a lower bound demonstrating that the dependency on the sum of
dimensions in our upper bound is tight.
\begin{restatable}[Minimax Lower Bound]{theorem}{LowerBound}
  \label{th:lower-bound}
  For any learning algorithm, there exists a graph structure $\cG$ and a
  sequence of loss functions such that the expected regret is lower bounded by:
  \begin{equation}
    \E[\Reg_T] = \Omega\left( \sqrt{T} \sum_{i=1}^k d_i \right).
  \end{equation}
\end{restatable}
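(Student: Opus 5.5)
The construction uses an empty dependency graph, so that $\cN(i)=\{i\}$ for every $i$. The problem then splits into $k$ independent linear bandits, and the lower bound follows from summing the classical $\Omega(d\sqrt{T})$ linear-bandit lower bound over the criteria. The statement lets us pick both the graph and the loss family, so this choice costs nothing.

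\textbf{Construction.} For each criterion $i$, take a feature map $\phi_i\colon[0,1]\to\Rset^{d_i}$ whose image is rich enough to embed a hypercube-type action set. A single coordinate $s_i$ cannot parametrize a $d_i$-dimensional cube directly. Instead, let $\phi_i$ trace a curve through the $2^{d_i}$ scaled vertices $\{\pm 1/\sqrt{d_i}\}^{d_i}$, for instance a piecewise-linear path that visits each vertex at a designated value of $s_i$. Restricting attention to those values, the effective action set for criterion $i$ contains the scaled hypercube. The parameters are $\btheta^*_i = \epsilon\, \sigma_i$ with $\sigma_i\in\{\pm1\}^{d_i}$ and $\epsilon \asymp \sqrt{d_i/T}$ (up to the $1/\sqrt{d_i}$ scaling), and each observed $\ell_{t,i}$ is corrupted by independent Gaussian noise. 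If the curve must pass through interior points, I will check that these points are dominated by some vertex, so that playing them only increases regret.

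\textbf{Per-criterion bound.} Because both the graph and the noise are independent across criteria, the total prediction regret equals $\sum_i \sum_t(\mu_i(\bs_t)-\min\mu_i)$, and each summand depends only on $s_{t,i}$ and on the feedback for criterion $i$. Observations for criterion $j\neq i$ carry no information about $\sigma_i$. I will therefore invoke the standard hypercube argument of Dani--Hayes--Kakade / Lattimore--Szepesv\'ari for each $i$ separately. One averages over uniformly random $\sigma_i$ and, for each coordinate $m$, compares the laws under $\sigma_i$ and under $\sigma_i$ with coordinate $m$ flipped; Pinsker's inequality together with the KL divergence $\sum_t \epsilon^2(\cdot)$ then shows that each coordinate is misidentified on a constant fraction of rounds. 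This yields $\E[\sum_t(\mu_i(\bs_t)-\mu_i^*)]=\Omega(d_i\sqrt{T})$. Movement costs are nonnegative, so dropping them only weakens the bound. Summing over $i$, and noting that the optimum $\bs^*$ decouples coordinatewise, gives $\Omega(\sqrt{T}\sum_i d_i)$.

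\textbf{Main obstacle.} The delicate step is realizing a $d_i$-dimensional hard action set through a feature map of a single scalar $s_i$ while keeping $\|\phi_i\|\le 1$. One must also make sure the learner gains no extra information from intermediate points of the curve; this is why I would use exact interpolation through the vertices and verify that every point is dominated. If that verification fails, the fallback is to let $\cN(i)$ contain $d_i$ private coordinates, i.e.\ replace $\cG$ with disjoint cliques, each owning $d_i$ fresh state variables with $\phi_i$ linear in them. This gives the standard box action set, whose $\Omega(d\sqrt{T})$ lower bound is classical, at the price of a larger $k$.
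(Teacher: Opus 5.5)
Your proposal is correct and is essentially the paper's argument. Both split the problem into $k$ independent linear bandits via a disconnected dependency graph (your fallback of disjoint cliques owning private coordinates is literally the paper's construction), apply the classical $\Omega(d_i\sqrt{T})$ hypercube lower bound to each component, and sum; your explicit realization of the hard action set through a scalar $s_i$, and your KL chain-rule observation that the other criteria's feedback carries no information about $\sigma_i$, supply details the paper leaves implicit. One sharpening: the intermediate points of your piecewise-linear curve are harmless not because they are loss-dominated (which says nothing about the information a point reveals), but because the curve stays in the convex hull of the vertices, the cube $[-1/\sqrt{d_i},1/\sqrt{d_i}]^{d_i}$, which is exactly what the hypercube argument needs to bound the per-coordinate KL divergence and keep every per-coordinate regret term nonnegative.
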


\begin{proof}
  Consider a graph $\cG$ consisting of $k$ disjoint components (cliques), where
  each component $i$ corresponds to an independent linear bandit problem of
  dimension $d_i$. The loss function decomposes as
  $L(\bs) = \sum_{i=1}^k L_i(\bs|_{\cN(i)})$, where each $L_i$ is governed by an
  independent parameter $\btheta^*_i$. It is a standard result in the bandit
  literature (e.g., \citep{dani2008stochastic}) that the minimax lower bound for
  a single $d$-dimensional linear bandit is $\Omega(d \sqrt{T})$. Since the
  problems are independent, the learner must solve each simultaneously. The
  total regret is thus lower bounded by the sum of the regrets of the individual
  components:
  \[
    \E[\Reg_T] \ge \sum_{i=1}^k \Omega(d_i \sqrt{T}) = \Omega\left( \sqrt{T}
      \sum_{i=1}^k d_i \right)
  \]
  This confirms that the dependency on the sum of dimensions in
  our upper bound is tight.
\end{proof}

\textbf{Comparison to Naive Reduction.}
One might seek to solve this problem by scalarizing the objective
$L(\bs) = \sum \ell_i(\bs)$ and applying a standard high-dimensional linear
bandit algorithm to the concatenated parameter $\btheta^* \in \Rset^{\sum
  d_i}$. While the dimension dependence ($\sum d_i$) would match our result,
this naive reduction fails on two fronts. First, \emph{Feedback Granularity}: By
observing only the scalar sum, the learner faces an aggregated noise variance of
$\sigma_{\mathrm{agg}}^2 = \sum_{i=1}^k \sigma_i^2 \approx k$. This inflates the regret
by a factor of $\sqrt{k}$ compared to our component-wise approach which exploits
the vector-valued feedback. Second, and most importantly, \emph{Stability}: The
global approach cannot distinguish which local component caused a loss
increase. Consequently, it updates the entire parameter vector at once,
incurring a movement cost proportional to the global dimension $k$ at every
switch, whereas our approach restricts movement costs to local neighborhoods.

\section{Exploiting Graph Structure}
\label{sec:exploit_graph_structure}

While the global Lazy strategy (Section~\ref{sec:algorithm}) achieves optimal
prediction regret, it does not fully leverage the sparsity and correlations
inherent in the dependency graph $\cG$. In this section, we present three key
extensions that exploit this structure: an \emph{asynchronous} update schedule
that minimizes movement costs in sparse graphs, an \emph{adaptive} procedure
that learns the graph structure when it is initially unknown, and a
\emph{factor-graph} decomposition that leverages data sharing among correlated
objectives to tighten regret bounds.
Table~\ref{tab:variant_guide} summarizes when each variant is best suited and
what additional assumptions it requires.

All detailed proofs are deferred to Appendix~\ref{app:detailed_proofs}.

\begin{table}[t]
  \centering
  \caption{\textbf{Algorithm Variant Selection Guide.} Summary of when to use
    each variant and what additional assumptions it requires.}
  \label{tab:variant_guide}
  \vspace{0.1cm}
  \begin{small}
      \begin{tabular}{@{}p{2.62cm}p{3.2cm}p{3.2cm}@{}}
        \toprule
        \textbf{Variant} & \textbf{Best When} & \textbf{Key Assumption} \\
        \midrule
        Global Lazy\newline(Section~\ref{sec:algorithm}) &
                                                     Dense graph or synchronized updates needed &
                                                                                                  None beyond standard linear bandit \\
        Async Lazy\newline(Section~\ref{sec:async_extension}) &
                                                          Sparse/heterogeneous graph; criteria learn at different rates &
                                                                                                                          Assumption A: LCB objective is PL (satisfied for large $\lambda_{\mathrm{reg}}$) \\
        Adaptive Refinement\newline(Section~\ref{sec:graph_learning}) &
                                                                  Graph structure completely unknown &
                                                                                                       Assumptions B (signal strength) and C (diverse contexts) \\
        Joint Estimator\newline(Section~\ref{sec:data_sharing}) &
                                                            Dense graph with correlated objectives sharing latent factors &
                                                                                                                            Assumption D: factor decomposability \\
        \bottomrule
      \end{tabular}
  \end{small}
  \vskip -.1in
\end{table}

\subsection{Asynchronous Updates for Sparse Graphs}
\label{sec:async_extension}

In the \textsc{LazyGraphLinUCB} algorithm described above, the update condition
is \emph{global}: if any single criterion triggers a determinant doubling, the
entire policy vector $\bs_{\mathrm{active}}$ is re-optimized. While this guarantees
regret optimality in the worst case (dense graphs), it may be inefficient for
systems with sparse dependency graphs. For instance, learning new information
about a latency criterion should ideally not force a reconfiguration of a
graph-distant fairness criterion. To exploit sparsity, we propose an
asynchronous algorithm, \textsc{Async-LazyGraphLinUCB}, which performs
\emph{local} policy updates. Such a solution is necessary, as global lazy
updates are suboptimal for sparse graphs.

\paragraph{Mechanism.} The algorithm maintains the same local estimators but
modifies the update schedule:

1. Local Trigger: Each criterion $i$ maintains its own last-update time
$\tau_i$. An update is triggered for criterion $i$ only if
$\det(\bV_{t,i}) > 2 \det(\bV_{\tau_i, i})$.

2. Local Update: When criterion $i$ triggers, we solve for the new active policy
$\bs_{\mathrm{active}}$ by optimizing only the variables in the local neighborhood
$\cN(i)$, keeping all variables $s_j$ for $j \notin \cN(i)$ fixed at their
previous values.

\emph{Tie-Breaking:} In the event that multiple criteria trigger simultaneously
(i.e., $|\cA_t| > 1$), we perform a synchronized update over the union of their
neighborhoods $\bigcup_{i \in \cA_t} \cN(i)$. This ensures that coupled
dependencies between simultaneously triggering nodes are resolved jointly.

3. Synchronization: The time $\tau_i$ is updated to $t$.  Note that because
neighborhoods overlap, an update to $\cN(i)$ may effectively update partial
state for neighbors $j \in \cN(i)$, but it does not propagate to the entire
graph.

\textbf{Assumption A: BCD Convergence.} We assume the objective function is
sufficiently regular, specifically, that it satisfies the Polyak-{\L}ojasiewicz
(PL) condition, such that the Block Coordinate Descent (BCD) updates in the
asynchronous procedure converge to the global minimizer with negligible
optimality gap.

\emph{When does Assumption A hold?}  The LCB objective
$f_t(\bs) = \sum_i (\langle \phi_i, \h\btheta_{t,i}\rangle -
\beta_{t,i}\|\phi_i(\bs)\|_{\bV_{t,i}^{-1}})$ is the sum of a \emph{strongly
  convex} empirical loss term (the regularizer contributes
$\lambda_{\mathrm{reg}}\|\bs\|^2/2$ per coordinate) and a concave correction
(the exploration bonus). For any smooth, bounded feature map $\phi_i$ on a
compact domain, the Hessian of the exploration bonus is bounded in operator norm
by some constant $C(\phi_i)$ that depends only on the Lipschitz constant of
$\phi_i$ and the problem geometry.  Consequently, whenever
$\lambda_{\mathrm{reg}} > C(\phi_i)$, the net Hessian is positive definite and
the LCB objective is $(\lambda_{\mathrm{reg}} - C(\phi_i))$-strongly convex,
hence PL (see Lemma~\ref{lemma:assumption-a-justified} in
Appendix~\ref{app:detailed_proofs}). In particular, as the exploration bonus
shrinks over time ($\beta_{t,i} \to 0$), this regime is automatically
entered. The algorithm is designed to operate in this regularity regime by an
appropriate choice of $\lambda_{\mathrm{reg}}$.

\textbf{Theoretical Improvement.}
This asynchronous strategy yields a significantly tighter bound on the movement
cost for sparse graphs. In the global strategy, every update incurs a movement
cost bounded by the total dimension $k$ (since all $s_i$ might shift). In the
asynchronous strategy, an update initiated by criterion $i$ only shifts
coordinates within $\cN(i)$, bounding the movement cost by the local degree
$|\cN(i)|$.

\begin{restatable}[Regret of Asynchronous Lazy
  Graph-LinUCB]{theorem}{AsyncRegret}
  \label{th:async_regret}
  Let $\Delta = \max_i |\cN(i)|$ be the maximum degree of the dependency
  graph. In the regularity regime where $\lambda_{\mathrm{reg}}$ is chosen
  sufficiently large to satisfy Assumption~A (see
  Lemma~\ref{lemma:assumption-a-justified}), the cumulative regret of the
  asynchronous algorithm is bounded by:
  \begin{equation}
    \Reg_T
    \le \wt O\paren*{ \sqrt{T} \sum_{i=1}^k d_i
      + \lambda \Delta d_{\max} k \log T }.
  \end{equation}
\end{restatable}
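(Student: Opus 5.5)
The proof will follow the same decomposition as Theorem~\ref{th:regret-lazy-graph-LinUCB}, splitting $\Reg_T = R_{\mathrm{pred}} + R_{\mathrm{move}}$, and treating the two terms separately. The movement term is where the asynchronous schedule helps. The prediction term should match the synchronous bound up to the optimization gap that Assumption~A makes negligible.

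\textbf{Movement regret.} The local trigger for criterion $i$ fires only when $\det(\bV_{t,i}) > 2\det(\bV_{\tau_i,i})$. This is exactly the doubling argument in the proof of Lemma~\ref{lemma:number-epochs}, applied to criterion $i$ alone. So the number of updates initiated by $i$ satisfies
\[
M_i \le d_i \log_2\bigl(1 + T L^2/(d_i\lambda_{\mathrm{reg}})\bigr) = \wt O(d_{\max}\log T).
\]
When $i$ triggers, only the coordinates in $\cN(i)$ may change, and each lies in $[0,1]$, so the $\ell_1$ movement is at most $|\cN(i)| \le \Delta$.

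Simultaneous triggers need a small accounting argument. If a set $\cA_t$ triggers together, the union update moves at most $\sum_{i\in\cA_t}|\cN(i)| \le \Delta|\cA_t|$ coordinates. Charging $\Delta$ to each triggering criterion, the total movement is at most $\lambda \Delta \sum_i M_i = \wt O(\lambda \Delta d_{\max} k \log T)$.

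\textbf{Prediction regret.} Fix a round $t$. For each $i$, let $\tau_i(t)$ be the last local update time of $i$. The played state $\bs_t$ has block $\bs_t|_{\cN(i)}$ that has been constant since the last update touching $\cN(i)$, which occurred no earlier than $\tau_i(t)$.

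Under the good event $\cE$ and Assumption~A, the state held at time $t$ is, up to a negligible BCD gap $\epsilon_t$, a minimizer of an LCB objective. In that objective, each component $i$ uses the estimator and covariance $\bV_{\tau_i(t),i}$ from its own last trigger. Each such component is a valid lower confidence bound for $\mu_i$, so optimism gives
\[
r_t \le 2\sum_i \beta_{T,i}\|\phi_i(\bs_t)\|_{\bV_{\tau_i(t),i}^{-1}} + \epsilon_t .
\]

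Next, partition time for criterion $i$ into its own epochs, delimited by its own triggers. A neighbor's update may change $\phi_i(\bs_t)$ within an epoch of $i$, so the constant-feature argument of Lemma~\ref{lem:epoch_bound} does not apply directly. Instead I use the matrix-determinant identity with the stale matrix. Within an epoch, $\det(\bV_{t,i})/\det(\bV_{\tau_i,i}) \le 2$ until the trigger, plus at most one overshoot step. Since $\bV_{t,i} \succeq \bV_{\tau_i,i}$, this gives
\[
\sum_{t\in\text{epoch}} \|\phi_i(\bs_t)\|^2_{\bV_{t-1,i}^{-1}} \le 2\log 4 .
\]
The stale norm is at most $\sqrt{2}$ times the fresh norm while the determinant ratio stays below $2$, by the standard lazy-OFUL lemma: $x^\top A^{-1}x \le (\det B/\det A)\, x^\top B^{-1}x$ for $A \preceq B$. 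Hence $S_j = O(1)$ per epoch even with varying features.

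Cauchy--Schwarz over at most $M_i$ epochs then gives $\sum_t \|\phi_i(\bs_t)\|_{\bV_{\tau_i(t),i}^{-1}} = O(\sqrt{M_i T})$. Multiplying by $\beta_{T,i} = \wt O(\sqrt{d_i})$ and summing over $i$ yields $\wt O(\sqrt T\sum_i d_i)$. Summing the gaps, $\sum_t \epsilon_t$ is negligible under Assumption~A; for instance, running BCD to accuracy $1/\sqrt{T}$ costs only computation.

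\textbf{Main obstacle.} The hard step is the optimism claim for a mixed-staleness state: that after a local re-optimization over $\cN(i)$ with the other coordinates frozen, the global state still minimizes, up to $\epsilon_t$, a bona fide lower-confidence objective. My plan is to use Assumption~A, via Lemma~\ref{lemma:assumption-a-justified}. Strong convexity, hence PL, of the LCB objective implies that block-coordinate minimization over the blocks touched by triggers contracts the suboptimality geometrically. Since blocks not touched since their last trigger were optimal for their own (still valid) confidence bounds, the maintained state is near-optimal for a valid LCB. If this coupling proves delicate, the fallback is to charge any residual gap to the next trigger of a neighbor, which costs at most an extra factor $\Delta$ inside the logarithmic term.
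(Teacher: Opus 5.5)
Your proposal is correct and follows the paper's proof in structure:
\begin{itemize}
\item the same split into prediction and movement regret;
\item the same charging of at most $\Delta$ coordinates to each triggering criterion, with the union bound for simultaneous triggers;
\item the same optimism step, bounding $r_t$ by $2\sum_i \beta_{T,i}\|\phi_i(\bs_t)\|_{\bV_{\tau_i(t),i}^{-1}}$ with per-criterion stale matrices and justified by Assumption~A.
\end{itemize}

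The one genuinely different step is the per-epoch bound under time-varying features. The paper bounds the stale sum directly, never passing through fresh norms. With $\bG_j$ the epoch's feature matrix, it uses $\sum_t \|\phi_i(\bs_t)\|^2_{\bV_{\tau_j,i}^{-1}} = \mathrm{tr}(\bV_{\tau_j,i}^{-1/2}\bG_j\bG_j^\top\bV_{\tau_j,i}^{-1/2}) = \sum_k\mu_k \le \prod_k(1+\mu_k) - 1 = \det(\bV_{\tau_{j+1},i})/\det(\bV_{\tau_j,i}) - 1 \le 3$. You instead bound the fresh sum with the elliptical potential lemma. You then convert each stale norm via $x^\top A^{-1}x \le (\det B/\det A)\,x^\top B^{-1}x$, paying a factor $2$. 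Both give an absolute constant per epoch. The paper's identity is shorter and uses the doubling trigger exactly. Yours is the standard modular rarely-switching argument of \citet{abbasi2011improved}, and it applies to any trigger that keeps the determinant ratio bounded.

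Three cautions about the optimism step, which, as in the paper, rests entirely on Assumption~A.
\begin{itemize}
\item Your heuristic that untouched blocks ``were optimal for their own confidence bounds'' does not by itself give near-global optimality. Neighborhoods overlap, so a later update over $\cN(j)$ can destroy the block optimality of an intersecting $\cN(i)$. Moreover, different blocks were optimized against different LCB objectives. You therefore need Assumption~A in the paper's form (the asynchronous BCD reaches the global LCB minimizer up to a negligible gap), not merely strong convexity.
\item Any extra BCD sweeps to accuracy $1/\sqrt{T}$ must stay inside $\bigcup_{i\in\cA_t}\cN(i)$, or they void your movement bound.
\item Drop the fallback. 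An optimization gap is paid as prediction regret in every round it persists, so it cannot be charged once per neighbor trigger into the logarithmic movement term.
\end{itemize}
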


\begin{proof}
  We analyze the prediction regret and movement regret separately.

  \textbf{Part~(a): Prediction Regret (Feature Stability).} The asynchronous algorithm maintains the invariant
  that for any criterion $i$, the active policy was computed using a covariance
  matrix $\bV_{\tau_i, i}$ such that
  $\det(\bV_{t,i}) \le 2 \det(\bV_{\tau_i, i})$.

  Unlike the global setting, the feature vector $\phi_i(\bs_t)$ may vary during
  an epoch because updates to neighbors $j \in \cN(i)$ can alter the state
  $\bs|_{\cN(i)}$. However, the bound on the sum of squared norms derived in
  Lemma~\ref{lem:epoch_bound} applies even under time-varying features.

  Let $\bG_j$ be the matrix of diverse feature vectors observed for criterion
  $i$ during its $j$-th epoch. The sum of squared stale norms is exactly the
  trace $\text{Tr}(\bG_j^\top \bV_{\tau_j, i}^{-1} \bG_j)$. Let $\mu_k$ be the
  eigenvalues of the matrix
  $\bV_{\tau_j, i}^{-1/2} \bG_j \bG_j^\top \bV_{\tau_j, i}^{-1/2}$. The
  determinant doubling condition implies:
  \begin{equation}
    \det(\bI + \bG_j^\top \bV_{\tau_j, i}^{-1} \bG_j)
    = \prod_k (1 + \mu_k)
    \le \frac{\det(\bV_{\tau_{j+1}, i})}{\det(\bV_{\tau_j, i})}
    \le 2.
  \end{equation}
  Since $\mu_k \ge 0$ and $\prod (1+\mu_k) \le 4$ (allowing for discrete
  overshoot), we must have $\sum \mu_k \le 3$.  Using the inequality
  $x \le 3 \log(1+x)$ which holds for $x \in [0, 3]$ we bound the trace:
  \begin{equation}
    \sum_{t \in \text{epoch } j} \|\phi_i(\bs_t)\|_{\bV_{\tau_j, i}^{-1}}^2
    = \sum_k \mu_k \le 3 \sum_k \log(1+\mu_k)
    = 3 \log \det(\bI + \bG_j^\top \bV_{\tau_j, i}^{-1} \bG_j) \le 3 \log 4.
  \end{equation}
  Thus, the aggregate error contribution of any epoch remains bounded by
  $3 \log 4$ regardless of feature variance within the epoch. The rest of the
  summation proceeds identically to the global case, yielding a prediction
  regret of $\wt O(\sqrt{T}\sum_i d_i)$.

  \textbf{Part~(b): Prediction Regret (Optimism).} The asynchronous algorithm maintains the invariant
  that the active policy $\bs_t$ is the minimizer of the LCB function. Under
  Assumption A, the block-coordinate updates ensure that $\bs_t$ converges to
  the global optimum, preserving the validity of the standard optimistic regret
  bound.  Specifically, at any time $t$, for any criterion $i$, the ``lazy''
  condition $\det(\bV_{t,i}) \le 2 \det(\bV_{\tau_i, i})$ ensures that the
  covariance matrix used to compute the active policy ($\bV_{\tau_i, i}$) is
  spectrally similar to the current true covariance ($\bV_{t,i}$). Crucially,
  since the objective $\mu_i(\bs)$ and its LCB depend strictly on the local
  variables $\bs|_{\cN(i)}$, optimizing over the neighborhood $\cN(i)$ while
  keeping disjoint variables fixed is sufficient to minimize the local LCB
  contribution.  The analysis in Lemma~\ref{lem:epoch_bound} applies locally to
  each criterion $i$. The sequence of updates for criterion $i$ defines a set of
  local epochs. For any $t$, let $\tau_i(t)$ be the last time criterion $i$
  triggered an update. The instantaneous regret bound \eqref{eq:inst_regret}
  becomes:
  \begin{equation}
    r_t \le 2 \sum_{i=1}^k \beta_{T,i} \|\phi_i(\bs_t)\|_{\bV_{\tau_i(t), i}^{-1}}.
  \end{equation}
  Since the trigger condition holds, the bound on the sum of squared norms
  holds. Multiplying by the confidence radius $\beta_{T,i} \approx \sqrt{d_i}$,
  the prediction regret sums to $\tilde{O}(\sqrt{T} \sum_i d_i)$, identical to
  the global case.

  \textbf{Part~(c): Movement Regret.}  Unlike the global case, an update does not necessarily
  change the entire state vector $\bs$. Let $u_{t,i} \in \{0, 1\}$ be an
  indicator that criterion $i$ triggered an update at time $t$. The total number
  of triggers for criterion $i$ is bounded by Lemma~\ref{lemma:number-epochs} as
  $M_i = \sum_{t=1}^T u_{t,i} \le \wt O(d_i \log T)$. When criterion $i$
  triggers, the algorithm updates $\bs_{\mathrm{active}}$ by optimizing over the local
  neighborhood $\cN(i)$. Let $\cA_t = \{i \colon u_{t,i} = 1\}$ be the set of
  triggering criteria at time $t$. The coordinates of $\bs$ that change are
  restricted to the union of neighborhoods:
  \begin{equation}
    \supp(\bs_t - \bs_{t-1}) \subseteq \bigcup_{i \in \cA_t} \cN(i).
  \end{equation}
  The movement cost at step $t$ is bounded by the number of changed coordinates:
  \begin{equation}
    \norm*{\bs_t - \bs_{t-1}}_1
    \le \left| \bigcup_{i \in \cA_t} \cN(i) \right|
    \le \sum_{i \in \cA_t} |\cN(i)|
    \le \sum_{i \in \cA_t} \Delta.
  \end{equation}
  Summing over $T$:
  \begin{equation}
    R_{\mathrm{move}}
    = \lambda \sum_{t=1}^T \|\bs_t - \bs_{t-1}\|_1
    \le \lambda \sum_{t=1}^T \sum_{i \in \cA_t} \Delta
    = \lambda \Delta \sum_{i=1}^k \sum_{t=1}^T u_{t,i}
    = \lambda \Delta \sum_{i=1}^k M_i.
  \end{equation}
  Substituting $M_i \le \wt O(d_{\max} \log T)$ yields
  $R_{\mathrm{move}} \le \wt O(\lambda \Delta k d_{\max} \log T)$.
\end{proof}

This result highlights that for sparse graphs (where $\Delta \ll k$), the
asynchronous strategy effectively decouples the learning problems, reducing the
dependency on the global dimension $k$ in the movement term. Note that
determinant-based triggers are a standard tool for computational efficiency in
linear bandits; we repurpose them here for stability. Crucially, a naive
application to graph-structured problems leads to a synchronization penalty,
where stable criteria update unnecessarily. Our primary contribution is the
asynchronous analysis (Theorem~\ref{th:async_regret}), which proves that
decomposing the trigger conditions is essential for optimal movement costs in
heterogeneous systems.

\subsection{Adaptive Graph Learning}
\label{sec:graph_learning}

In many applications, the true dependency graph $\cG$ may be unknown. While
relying on a safe super-graph ensures validity
(Corollary~\ref{cor:misspecification}), it incurs suboptimal movement
costs. Here, we show that when the true dependencies satisfy a minimum signal
strength condition and the contexts are sufficiently diverse, similar to the
\emph{compatibility conditions} required for high-dimensional bandits
\citep{bastani2020online}, the learner can adaptively recover the sparse graph
structure.

The adaptive graph learning mechanism in this section is primarily a theoretical
guarantee showing that a priori knowledge of the dependency graph is not
strictly necessary for optimal regret. Our goal is not to propose a
\emph{turnkey algorithm} for structure learning under movement costs, but to
establish that, under standard identifiability conditions, the learner can
provably recover and exploit sparsity without compromising regret
rates. Practical variants and empirical evaluation are left to future work.

\textbf{Assumption B: Minimum Signal Strength.} Let $\phi_{\mathrm{full}}(\bs)$
be the feature map corresponding to the complete graph. We assume the true
parameter $\btheta^*_{i, \mathrm{full}}$ is supported on a sparse subset of
indices corresponding to the true neighborhood $\cN(i)$. For any active neighbor
$j \in \cN(i)$, the dependency is bounded away from zero:
$|\theta^*_{i,j}| \ge \gamma > 0$.

\textbf{Assumption C: Diverse Contexts.} During the warm-up phase, we assume the
system visits a diverse set of states such that the minimum eigenvalue of the
covariance matrix grows linearly. Formally, there exists $\kappa > 0$ such that
for any $\tau$, $\lambda_{\min}(\bV_{\tau, i}) \ge \kappa \tau$. This can be
enforced by adding isotropic noise to the actions during the warm-up phase.

\textbf{Algorithm: Adaptive Graph Refinement.}  We propose a two-phase strategy:

1.\ Warm-up Phase: Initialize with the complete graph $\cG_{\mathrm{dense}}$ with
dimension $d_{\mathrm{total}}$. Run \textsc{LazyGraphLinUCB} for $\tau$ rounds.

2.\ Sparsity Test: At $t=\tau$, for each criterion $i$, compute the estimator
$\h \btheta_{\tau, i}$ using the dense covariance matrix. Construct the
estimated neighborhood
$\h \cN(i) = \{j : |\h \theta_{\tau, i, j}| > \gamma/2\}$.

3.\ Refinement: Update the graph to $\h \cG = (\cV, \h E)$ based on these
neighborhoods. For $t > \tau$, resume \textsc{LazyGraphLinUCB} using $\h \cG$
(retaining the covariance matrices $\bV_{\tau, i}$).

\begin{restatable}[Regret of Adaptive Graph Refinement]{theorem}{AdaptiveRegret}
  \label{th:adaptive_regret}
  Let $\cG_{\mathrm{dense}}$ be the initial dense graph and $\cG$ be the true sparse
  graph. Set the warm-up length to:
  $\tau = \frac{16 \beta_{T}^2}{\kappa \gamma^2}$, where $\beta_{T}$ is the
  confidence radius for dimension $d_{\mathrm{total}}$ as defined in
  Lemma~\ref{lemma:confidence-ellipsoid}. Under Assumptions B and C, with
  probability at least $1-\delta$, the algorithm recovers the true graph
  $\h \cG = \cG$, and the total regret is bounded by:
  \begin{equation}
    \Reg_T
    \le \wt O\left( \frac{d_{\mathrm{total}}^{1.5}}{\kappa \gamma^2}
      + \sqrt{T} \sum_{i=1}^k d_i
      + d_{\mathrm{total}}^2 \log T \right).
  \end{equation}
\end{restatable}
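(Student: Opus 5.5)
The plan is to split the horizon at the warm-up time $\tau$ and bound three things separately: the warm-up cost, the correctness of the sparsity test, and the regret after refinement.

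\textbf{Step 1: exact recovery of the graph.} I would condition on the good event $\cE$ for the dense estimators; by the union bound discussed before Lemma~\ref{lemma:number-epochs}, $\Pr(\cE) \ge 1-\delta$. On $\cE$, each coordinate satisfies
\[
|\h\theta_{\tau,i,j} - \theta^*_{i,j}| \le \|\h\btheta_{\tau,i} - \btheta^*_{i,\mathrm{full}}\|_2 \le \beta_T\, \lambda_{\min}(\bV_{\tau,i})^{-1/2}.
\]
By Assumption~C, $\lambda_{\min}(\bV_{\tau,i}) \ge \kappa\tau$. With $\tau = 16\beta_T^2/(\kappa\gamma^2)$ this gives a per-coordinate error of at most $\beta_T/\sqrt{\kappa\tau} = \gamma/4 < \gamma/2$. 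Assumption~B then separates the two cases:
\begin{itemize}
\item a true neighbor has $|\theta^*_{i,j}| \ge \gamma$, so $|\h\theta_{\tau,i,j}| \ge 3\gamma/4 > \gamma/2$ and it is kept;
\item a non-neighbor has $\theta^*_{i,j} = 0$, so $|\h\theta_{\tau,i,j}| \le \gamma/4$ and it is discarded.
\end{itemize}
Hence $\h\cN(i) = \cN(i)$ for every $i$, so $\h\cG = \cG$ on $\cE$. I expect this step to be routine.

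\textbf{Step 2: warm-up regret.} I would bound the warm-up regret crudely. Per-round prediction regret is $O(1)$ under the bounded-feature normalization, so I would use the optimistic bound $\beta_T \sum_t \|\phi\|$ only where it is tighter. Since $\beta_T = \wt O(\sqrt{d_{\mathrm{total}}})$, we have $\beta_T^2 = \wt O(d_{\mathrm{total}})$ and $\tau = \wt O(d_{\mathrm{total}}/(\kappa\gamma^2))$. Paying up to $\wt O(\sqrt{d_{\mathrm{total}}})$ per round, which is the confidence width times the normalized feature norm (at most $1$), gives the term $\wt O(d_{\mathrm{total}}^{1.5}/(\kappa\gamma^2))$.

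Movement during warm-up is governed by Lemma~\ref{lemma:number-epochs} applied to the dense dimensions: there are at most $\wt O(d_{\mathrm{total}}\log T)$ switches, each costing at most $\lambda k$. Treating $\lambda$ and $k \le d_{\mathrm{total}}$ as absorbed, this gives the $d_{\mathrm{total}}^2\log T$ term. The extra exploration noise demanded by Assumption~C adds $O(1)$ per round, which is also absorbed into the warm-up term.

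\textbf{Step 3: post-refinement regret.} This is the step I expect to be the main obstacle. After refinement we run \textsc{LazyGraphLinUCB} on the correct graph $\cG$, but starting from inherited covariances. These are the dense matrices $\bV_{\tau,i}$ restricted, i.e. their principal submatrices, to the coordinates of $\cN(i)$. Two facts make Theorem~\ref{th:regret-lazy-graph-LinUCB} apply.

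\begin{enumerate}
\item \emph{Confidence sets remain valid.} The restricted least-squares problem is realizable: $\btheta^*$ is zero off $\cN(i)$, so the restricted problem is a $d_i$-dimensional linear regression on all data. The self-normalized bound of Lemma~\ref{lemma:confidence-ellipsoid} then applies with radius $\beta_{t,i} = \wt O(\sqrt{d_i})$, the same realizability argument as in Corollary~\ref{cor:misspecification}, now run in reverse.
\item \emph{Epoch counts are unchanged.} Starting from $\bV_{\tau,i} \succeq \lambda_{\mathrm{reg}}\bI$ only lowers the determinant ratio $\det\bV_{T,i}/\det\bV_{\tau,i}$, so Lemma~\ref{lemma:number-epochs} still gives $M_i = \wt O(d_i)$.
\end{enumerate}

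With these two facts, Lemma~\ref{lem:epoch_bound} and the Cauchy--Schwarz summation from Theorem~\ref{th:regret-lazy-graph-LinUCB} give prediction regret $\wt O(\sqrt{T}\sum_i d_i)$. Movement after refinement is $\wt O(\lambda k^2 d_{\max}\log T)$, which is dominated by $d_{\mathrm{total}}^2\log T$.

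The point to verify carefully is that conditioning on the data-dependent event $\h\cG = \cG$ does not break the martingale argument. I would handle it by applying the confidence bound uniformly over all $2^{k}$ candidate neighborhoods via a union bound, which inflates $\beta$ only by $\sqrt{\log(2^k)} = \wt O(\sqrt{k})$. Alternatively, I would note that on $\cE$ the selected graph is deterministic, equal to $\cG$, so we may simply analyze the fixed-$\cG$ estimator.

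Summing the three contributions gives the stated bound with probability at least $1-\delta$.
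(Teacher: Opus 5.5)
Your proposal is correct and follows the paper's proof essentially step for step. It uses the same ellipsoid-to-$\ell_\infty$ bound via $\lambda_{\min}(\bV_{\tau,i})\ge\kappa\tau$, giving error $\gamma/4$ and exact thresholding at $\gamma/2$; the same $\tau\sqrt{d_{\mathrm{total}}}$ warm-up charge; and the same appeal to Theorem~\ref{th:regret-lazy-graph-LinUCB} on the true graph afterward. Your accounting of warm-up movement, of the inherited covariances, and of the data-dependent choice of $\h\cG$ fills in details the paper leaves implicit. Of your two options for that last point, use the second one: on $\cE$ the selected graph is the fixed $\cG$, so analyze the fixed-$\cG$ restricted estimator under its own confidence event. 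The union bound over candidate supports would add a non-logarithmic $\sqrt{k}$ to $\beta$, which is not absorbed into $\sqrt{T}\sum_i d_i$ when $k \gg d_i$.
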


\begin{proof}
  The proof proceeds in three steps: establishing the estimation error bound,
  proving support recovery, and summing the regret. We start with the guarantee
  provided by the confidence ellipsoids. From
  Lemma~\ref{lemma:confidence-ellipsoid}, with probability $1-\delta$, for all
  $i$:
  \begin{equation}
    \| \h \btheta_{\tau, i} - \btheta^*_i \|_{\bV_{\tau, i}}
    \le \beta_{\tau, i}.
  \end{equation}
  We relate the Mahalanobis norm to the Euclidean norm using the minimum
  eigenvalue of $\bV_{\tau, i}$. By the definition of the matrix norm,
  $\|\bx\|_{\bV}^2 \ge \lambda_{\min}(\bV) \|\bx\|_2^2$. Therefore:
  \begin{equation}
    \| \h \btheta_{\tau, i} - \btheta^*_i \|_2
    \le \frac{1}{\sqrt{\lambda_{\min}(\bV_{\tau, i})}}
    \| \h \btheta_{\tau, i} - \btheta^*_i \|_{\bV_{\tau, i}}
    \le \frac{\beta_{\tau, i}}{\sqrt{\kappa \tau}}.
  \end{equation}
  Since the $\ell_\infty$ norm is bounded by the $\ell_2$ norm
  ($\|\bx\|_\infty \le \|\bx\|_2$), we have the element-wise bound for any
  component $j$:
  \begin{equation}
    |\h \theta_{\tau, i, j} - \theta^*_{i, j}|
    \le \frac{\beta_{\tau, i}}{\sqrt{\kappa \tau}}.
  \end{equation}
  We set $\tau$ such that the error bound is strictly less than
  $\gamma/2$. Substituting $\tau = 16 \beta_T^2 / (\kappa \gamma^2)$ into the
  error bound yields:
  \begin{equation}
    |\h \theta_{\tau, i, j} - \theta^*_{i, j}|
    \le \frac{\beta_{\tau, i}}{\sqrt{\kappa \frac{16 \beta_T^2}{\kappa \gamma^2}}}
    = \frac{\gamma \beta_{\tau, i}}{4 \beta_T}
    \le \frac{\gamma}{4}.
  \end{equation}
  We now consider the thresholding rule:
  \begin{itemize}

  \item Case 1 (Active): If $j \in \cN(i)$, then $|\theta^*_{i, j}| \ge
    \gamma$. The triangle inequality implies
    $|\h \theta_{\tau, i, j}| \ge |\theta^*_{i, j}| - |\text{error}| \ge \gamma
    - \gamma/4 = 3\gamma/4$. Since $3\gamma/4 > \gamma/2$, the index $j$ is
    correctly included.

  \item Case 2 (Inactive): If $j \notin \cN(i)$, then $\theta^*_{i, j} = 0$. The
    error bound implies $|\h \theta_{\tau, i, j}| \le \gamma/4$. Since
    $\gamma/4 < \gamma/2$, the index $j$ is correctly excluded.

\end{itemize}
Thus, at $t=\tau$, we have $\h \cG = \cG$. The total regret decomposes into the
warm-up phase and the refined phase.  During the warm-up ($t=1 \dots \tau$), we
play isotropic noise, incurring per-round regret
$O(\sqrt{d_{\mathrm{total}}})$ (the confidence width of the dense
estimator). The total warm-up regret is thus
$O(\tau \cdot \sqrt{d_{\mathrm{total}}})$. Substituting
$\tau \approx d_{\mathrm{total}}/(\kappa \gamma^2)$, this cost is
$\wt O(d_{\mathrm{total}}^{1.5} / (\kappa \gamma^2))$.  For the remaining $T-\tau$
rounds, we run the algorithm on the true sparse graph, incurring regret
$\wt O(\sum_{i = 1}^k d_i \sqrt{T})$. Summing these yields the final bound.
\end{proof}

\textbf{Regime of Validity.}
The regret bound consists of a constant \emph{learning cost} (scaling with
$\gamma^{-2}$) and a term scaling with $\sqrt{T}$. This bound is meaningful when
the horizon is sufficiently long to amortize the warm-up phase, specifically
when $T \gg \wt \Omega(\gamma^{-4})$. In the regime of extremely weak signals
($\gamma \le T^{-1/4}$), the cost of distinguishing the true support outweighs
the benefits of sparsity within the horizon $T$. In such cases, the learner is
better off defaulting to the conservative super-graph strategy
(Corollary~\ref{cor:misspecification}), effectively treating weak signals as
noise.

\subsection{Data Sharing for Correlated Objectives}
\label{sec:data_sharing}

The regret bounds in Section~\ref{sec:algorithm} scale with
$\sum_{i = 1}^k d_i$, treating each criterion as an independent learning
task. However, the graph structure often implies stronger correlations:
neighbors $i$ and $j$ in the dependency graph typically depend on the same
underlying state variables or latent factors. In this section, we show that by
formalizing the global structure explicitly as a \emph{Factor Graph}
decomposition, a standard realistic assumption in graphical models (see e.g.,
\citep{CortesKuznetsovMohriScott2016}), we can further reduce the regret.

\textbf{Assumption D: Factor Decomposability.}  We assume the global loss
function $L(\bs)$ decomposes over the maximal cliques $\cC$ of the graph $\cG$:
\begin{equation}
  L(\bs)
  = \sum_{C \in \cC} \tri{ \psi_C(\bs|_C), \bw^*_C },
\end{equation}
where $\psi_C$ are local feature maps for each clique and
$\bw^*_C \in \Rset^{d_C}$ are unique clique parameters. This differs from the
node-wise assumption in \eqref{eq:1}
($\mu_i(\bs) = \tri{\phi_i, \btheta^*_i}$).  \ignore{
  \setlength{\intextsep}{-1pt} \setlength{\columnsep}{10pt}
  \begin{wrapfigure}{r}{0.4\columnwidth}
    \centering \resizebox{.4\columnwidth}{!}{    \begin{tikzpicture}[scale=1.2]
        \coordinate (n1) at (-1.5, 0);
        \coordinate (n2) at (0, 1);   
        \coordinate (n3) at (0, -1);  
        \coordinate (n4) at (1.5, 0);

        \fill[red!20] (n1) -- (n2) -- (n3) -- cycle;
        \node[red!80] at (-0.6, 0) {\small $\mathcal{C}_1$};

        \fill[blue!20] (n4) -- (n2) -- (n3) -- cycle;
        \node[blue!80] at (0.6, 0) {\small $\mathcal{C}_2$};

        \draw[thick, gray] (n1) -- (n2);
        \draw[thick, gray] (n1) -- (n3);
        \draw[thick, gray] (n2) -- (n4);
        \draw[thick, gray] (n3) -- (n4);
        \draw[thick, black] (n2) -- (n3); 

        \node[draw, circle, fill=white, inner sep=2pt] at (n1) {\footnotesize 1};
        \node[draw, circle, fill=gray!30, inner sep=2pt] at (n2) {\footnotesize 2};
        \node[draw, circle, fill=gray!30, inner sep=2pt] at (n3) {\footnotesize 3};
        \node[draw, circle, fill=white, inner sep=2pt] at (n4) {\footnotesize 4};

        \node[anchor=north] at (0, -1.2) {\footnotesize Overlap: $\{2, 3\}$};
    \end{tikzpicture}}
    \caption{\textbf{Factor Graph Decomposition.} Illustration of two
      overlapping cliques. Nodes 2 and 3 participate in both interaction $\cC_1$
      and $\cC_2$. While a node-wise estimator treats these dependencies
      independently (counting the shared dimension twice), the Joint Estimator
      exploits the overlap to learn unique parameters for each clique, reducing
      the effective dimension $d_{\mathrm{eff}}$.}
    \label{fig:clique_overlap}
  \end{wrapfigure}
}
\begin{figure}[t]
  \centering
  \scalebox{1.25}{    \begin{tikzpicture}[scale=1.2]
        \coordinate (n1) at (-1.5, 0);
        \coordinate (n2) at (0, 1);   
        \coordinate (n3) at (0, -1);  
        \coordinate (n4) at (1.5, 0);

        \fill[red!20] (n1) -- (n2) -- (n3) -- cycle;
        \node[red!80] at (-0.6, 0) {\small $\mathcal{C}_1$};

        \fill[blue!20] (n4) -- (n2) -- (n3) -- cycle;
        \node[blue!80] at (0.6, 0) {\small $\mathcal{C}_2$};

        \draw[thick, gray] (n1) -- (n2);
        \draw[thick, gray] (n1) -- (n3);
        \draw[thick, gray] (n2) -- (n4);
        \draw[thick, gray] (n3) -- (n4);
        \draw[thick, black] (n2) -- (n3); 

        \node[draw, circle, fill=white, inner sep=2pt] at (n1) {\footnotesize 1};
        \node[draw, circle, fill=gray!30, inner sep=2pt] at (n2) {\footnotesize 2};
        \node[draw, circle, fill=gray!30, inner sep=2pt] at (n3) {\footnotesize 3};
        \node[draw, circle, fill=white, inner sep=2pt] at (n4) {\footnotesize 4};

        \node[anchor=north] at (0, -1.2) {\footnotesize Overlap: $\{2, 3\}$};
    \end{tikzpicture}}
  \caption{\textbf{Factor Graph Decomposition.} Illustration of two overlapping
    cliques. Nodes 2 and 3 participate in both interaction $\cC_1$ and
    $\cC_2$. While a node-wise estimator treats these dependencies independently
    (counting the shared dimension twice), the Joint Estimator exploits the
    overlap to learn unique parameters for each clique, reducing the effective
    dimension $d_{\mathrm{eff}}$.}
  \vskip -.15in
  \label{fig:clique_overlap}
\end{figure}
There, $\btheta^*_i$ aggregates all factors affecting node $i$. Here, we
disentangle them. For instance, if $i$ and $j$ share a clique $C$, they share
the parameter $\bw^*_C$. This is physically realistic in systems where
constraints share underlying physics (e.g., two fairness constraints depending
on the same sensitive attribute coefficients).

\textbf{Algorithm: Joint-LazyGraphLinUCB.}  We construct a \emph{Joint
  Estimator} by concatenating all unique clique parameters into a single vector
$\btheta_{joint}^* \in \Rset^{d_{\mathrm{eff}}}$, where
$d_{\mathrm{eff}} = \sum_{C \in \cC} d_C$. At each round $t$, the learner
receives a feedback vector $\bY_t = \ell_t(\bs_t) \in \Rset^k$. We treat this as
a single linear bandit problem with a \emph{macro-action} $\bs_t$ and a
vector-valued reward. The estimator minimizes the joint regularized least
squares objective. The update rule is global: $\bs_{\mathrm{active}}$ is updated only
when the determinant of the \emph{joint} covariance matrix doubles.  

\begin{restatable}[Regret of Joint Estimator]{theorem}{JointRegret}
  \label{th:joint_regret}
  Let $d_{\mathrm{eff}} = \sum_{C \in \cC} d_C$ be the sum of the dimensions of
  the maximal cliques. The cumulative regret of the Joint-LazyGraphLinUCB is
  bounded by:
  \begin{equation}
    \Reg_T
    \le \wt O\left( d_{\mathrm{eff}} \sqrt{k T}
      + \lambda k^2 d_{\max} \log T \right).
  \end{equation}
\end{restatable}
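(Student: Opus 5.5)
We split the regret as in \eqref{eq:inst_regret} into $R_{\mathrm{pred}}$ and $R_{\mathrm{move}}$, and treat the joint problem as one linear bandit in $\Rset^{d_{\mathrm{eff}}}$ with vector-valued feedback. For each criterion $i$ we build a joint feature row $\bx_{t,i}\in\Rset^{d_{\mathrm{eff}}}$: it places $\psi_C(\bs_t|_C)$ (suitably apportioned to node $i$, e.g.\ the part of clique $C$'s contribution attributed to $i$) in the block of every clique $C\ni i$ and zeros elsewhere. Then $\mu_i(\bs_t)=\tri{\bx_{t,i},\btheta^*_{joint}}$, and summing over $i$ recovers $L(\bs)$ by Assumption D. Let $\bX_t\in\Rset^{k\times d_{\mathrm{eff}}}$ stack these rows. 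The joint covariance is $\bV_t=\lambda_{\mathrm{reg}}\bI+\sum_{\tau\le t}\bX_\tau^\top\bX_\tau$, i.e.\ each round contributes $k$ rank-one updates.

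\textbf{Step 1 (confidence set).} We apply Lemma~\ref{lemma:confidence-ellipsoid} to the $kt$ scalar observations $(\bx_{\tau,i},\ell_{\tau,i})$. Since the noise is independent across criteria, these form a martingale sequence in a fixed order. This gives $\|\h\btheta_t-\btheta^*_{joint}\|_{\bV_t}\le\beta_t=\wt O(\sqrt{d_{\mathrm{eff}}})$, where the log-determinant term now involves $kT$ samples (only a log factor).

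\textbf{Step 2 (prediction regret).} By optimism on the good event, $r_t\le 2\beta_T\sum_i\|\bx_{t,i}\|_{\bV_{\tau(t)}^{-1}}$, evaluated at the stale matrix of the current epoch. Cauchy--Schwarz over $i$ gives
\[
r_t\le 2\beta_T\sqrt{k}\,\sqrt{\mathrm{Tr}(\bX_t\bV_{\tau(t)}^{-1}\bX_t^\top)}.
\]
Within an epoch, the time-varying-feature version of Lemma~\ref{lem:epoch_bound} (the eigenvalue/trace argument from the proof of Theorem~\ref{th:async_regret}, Part~(a)) bounds the summed traces by a constant. This step needs care: a single round adds $k$ rank-one terms, so the overshoot past doubling is not automatically bounded by $4$. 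I would control it by bounding $\mathrm{Tr}(\bX_t\bV^{-1}\bX_t^\top)\le k/\lambda_{\mathrm{reg}}$ and choosing $\lambda_{\mathrm{reg}}\gtrsim k$, or by a slightly looser bound inflating the constant. With $M=\wt O(d_{\mathrm{eff}})$ epochs (Lemma~\ref{lemma:number-epochs} applied to $\bV_t$), Cauchy--Schwarz over epochs gives $\sum_t\sqrt{\mathrm{Tr}(\cdot)}\le\sqrt{3MT}$. Hence $R_{\mathrm{pred}}\le\wt O(\sqrt{d_{\mathrm{eff}}}\sqrt{k}\sqrt{d_{\mathrm{eff}}T})=\wt O(d_{\mathrm{eff}}\sqrt{kT})$.

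\textbf{Step 3 (movement).} The number of switches is $M\le d_{\mathrm{eff}}\log_2(1+kTL^2/(d_{\mathrm{eff}}\lambda_{\mathrm{reg}}))$, and each switch costs at most $\lambda k$. This gives $\wt O(\lambda k d_{\mathrm{eff}}\log T)$. To match the stated $\lambda k^2 d_{\max}\log T$, I would note that each node lies in at most $k$ maximal cliques of dimension at most $d_{\max}$ (taking $d_{\max}$ to bound the clique dimensions), so $d_{\mathrm{eff}}\le k d_{\max}$ up to the counting convention. In any case $\wt O(\lambda k d_{\mathrm{eff}}\log T)$ is the natural bound, and it is at most the stated one under that comparison.

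The main obstacle is Step 2's per-epoch bound with $k$ simultaneous rank-one updates: both the overshoot and the $\sqrt{k}$ factor arise there. I would first try the $\lambda_{\mathrm{reg}}\ge k$ normalization, since it keeps each round's increment bounded in the $\bV^{-1}$ norm and lets the Part~(a) argument go through verbatim.
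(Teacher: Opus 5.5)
Your plan is sound, and in one respect it is more faithful to the algorithm than the paper's own proof, but the key step takes a different route. The paper bounds $r_t\le 2\beta_t\|\Psi(\bs_t)^\top\mathbf{1}\|_{\bV_t^{-1}}$ using the \emph{current} joint covariance. It extracts $\sqrt{k}$ via $\mathbf{1}^\top A\mathbf{1}\le k\,\mathrm{tr}(A)$ and then applies a matrix elliptical potential lemma over all $T$ rounds, so staleness never enters. Its proof also stops at $R_{\mathrm{pred}}$ and never derives the movement term.

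You share the same ingredients:
\begin{itemize}
\item your rows $\bx_{t,i}$ are the paper's $\Psi(\bs_t)$, with the same implicit assumption, beyond Assumption D, that the vector feedback is linear in $\btheta^*_{joint}$;
\item $\beta_T=\wt O(\sqrt{d_{\mathrm{eff}}})$, obtained by serializing the $kt$ scalar observations;
\item the $\sqrt{k}$ factor, from Cauchy--Schwarz over criteria.
\end{itemize}
The difference is that you evaluate widths at the stale matrix and run the epoch argument of Theorem~\ref{th:regret-lazy-graph-LinUCB} with $M=\wt O(d_{\mathrm{eff}})$ joint doublings. You also derive $R_{\mathrm{move}}=\wt O(\lambda k\,d_{\mathrm{eff}}\log T)$ explicitly. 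This analyzes the lazy algorithm as it is actually run, at the price of the overshoot problem you correctly flagged.

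Two spots need repair.

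\textbf{The overshoot fix.} Taking $\lambda_{\mathrm{reg}}\gtrsim k$ is not free. The radius in Lemma~\ref{lemma:confidence-ellipsoid} contains $\lambda_{\mathrm{reg}}^{1/2}\|\btheta^*_{joint}\|_2$, so $R_{\mathrm{pred}}$ picks up an extra $\wt O(k\sqrt{d_{\mathrm{eff}}T})$. That term dominates $d_{\mathrm{eff}}\sqrt{kT}$ exactly in the dense regime $k\gg d_{\mathrm{eff}}$ the theorem targets. Rescaling the features does not help, since it just moves the factor into $\|\btheta^*_{joint}\|_2$. Your fallback constant $1+k/\lambda_{\mathrm{reg}}$ costs another $\sqrt{k}$.

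Instead, keep $\lambda_{\mathrm{reg}}=O(1)$ and peel off the overshoot round:
\begin{itemize}
\item The rounds of an epoch before its last keep the determinant ratio at most $2$, so $\prod_j(1+\mu_j)\le 2$. This forces $\mu_j\le 1$, and therefore $\sum_j\mu_j\le\sum_j\log(1+\mu_j)/\log 2\le 1$.
\item The one remaining round per epoch has regret at most the range of $L(\cdot)$, i.e.\ $O(k)$. Over the $M$ epochs this adds only a $T$-independent $\wt O(k\,d_{\mathrm{eff}})$, which is lower order once $T\ge k$.
\end{itemize}

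\textbf{The bound $d_{\mathrm{eff}}\le k\,d_{\max}$.} Your justification is wrong as stated. A vertex can lie in exponentially many maximal cliques, and even ``at most $k$ per node'' would only yield $k^2 d_{\max}$. Use the paper's convention $d_{\max}=\max_i d_i$, where $\btheta^*_i$ aggregates all factors containing $i$, so $d_i\ge\sum_{C\ni i}d_C$. Since every clique contains a node,
\[
d_{\mathrm{eff}}\le\sum_i\sum_{C\ni i}d_C\le\sum_i d_i\le k\,d_{\max}.
\]
If $d_{\max}$ instead bounds clique dimensions, you need $|\cC|\le k$, which holds for chordal graphs but not in general.
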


\begin{proof}
  We analyze the prediction regret by treating the problem as a linear bandit
  with matrix-valued features. At each round $t$, the learner observes
  $\bY_t = \Psi(\bs_t) \btheta_{joint}^* + \boldsymbol{\eta}_t \in \Rset^k$,
  where $\Psi(\bs_t) \in \Rset^{k \times d_{\mathrm{eff}}}$ is the block-sparse
  feature matrix where the $c$-th block contains the features
  $\psi_C(s|_C)$. The ridge estimator $\h \btheta_t$ minimizes the joint
  regularized objective, with covariance
  $\bV_t = \lambda I + \sum_{\tau=1}^t \Psi(\bs_\tau)^\top
  \Psi(\bs_\tau)$. Standard self-normalized martingale bounds for vector-valued
  martingales \citep{abbasi2011improved} guarantee that with high probability,
  $\|\h \btheta_t - \btheta_{joint}^*\|_{\bV_t} \le \beta_t$, where
  $\beta_t = \wt O(\sqrt{d_{\mathrm{eff}}})$.
  
  The instantaneous regret is bounded by the confidence width of the joint
  estimator using standard OFUL arguments:
  \begin{equation}
    r_t = L(\bs_t) - L(\bs^*) \le 2 \beta_t \|\Psi(\bs_t)^\top \mathbf{1}\|_{\bV_t^{-1}} = 2 \beta_t \sqrt{\mathbf{1}^\top (\Psi(\bs_t) \bV_t^{-1} \Psi(\bs_t)^\top) \mathbf{1}}.
  \end{equation}
  Crucially, for any PSD matrix $\bA$, we have the algebraic bound
  $\mathbf{1}^\top \bA \mathbf{1} \le k \cdot \text{tr}(\bA)$. This introduces
  the explicit dependency on the number of objectives $k$:
  \begin{equation}
    r_t \le 2 \beta_t \sqrt{k} \sqrt{\text{tr}(\Psi(\bs_t) \bV_t^{-1} \Psi(\bs_t)^\top)}.
  \end{equation}
  Summing over $T$ rounds and applying the Matrix Elliptical Potential Lemma
  \citep{abbasi2011improved}:
  \begin{equation}
    \sum_{t=1}^T r_t \le 2 \beta_t \sqrt{k} \sqrt{T \sum_{t=1}^T \text{tr}(\Psi(\bs_t) \bV_t^{-1} \Psi(\bs_t)^\top)}
    \le 2 \beta_t \sqrt{k} \sqrt{T \cdot 2 d_{\mathrm{eff}} \log\left(1 + \frac{T L^2}{d_{\mathrm{eff}}\lambda}\right)}.
  \end{equation}
  Substituting $\beta_t \approx \sqrt{d_{\mathrm{eff}}}$, we obtain
  $R_{pred} \le \wt O(d_{\mathrm{eff}} \sqrt{k T})$.
\end{proof}

\textbf{Comparison to Node-Wise Bound.} The standard bound
(Theorem~\ref{th:regret-lazy-graph-LinUCB}) scales with $\sum_{i=1}^k d_i$. In
the Factor model, the joint estimator scales with
$d_{\mathrm{eff}} \sqrt{k} = (\sum_{C} d_C) \sqrt{k}$. Consider a dense case
where $k$ nodes form a single clique of dimension $D$. The Node-wise bound
scales as $k D$. The Joint bound scales as $D \sqrt{k}$. Thus, the joint
estimator improves the regret by a factor of $\sqrt{k}$ in dense, highly
correlated systems,
see Figure~\ref{fig:clique_overlap}.
\ignore{ \textbf{Comparison to Node-Wise Bound.}  The
  standard bound (Theorem 3) scales with $\sum_{i=1}^k \sqrt{d_i}$. In the
  Factor model, $d_i \approx \sum_{C \ni i} d_C$.  Thus, the old bound is
  roughly $\sum_{i} \sqrt{\sum_{C \ni i} d_C}$. The new bound depends on
  $\sqrt{d_{\mathrm{eff}}} = \sqrt{\sum_{C} d_C}$.  Consider a graph where every
  node belongs to a clique of size $S$, and each clique has dimension $D$: (1)
  Node-wise (Theorem 3): $\sum_{i=1}^k \sqrt{D} \approx k \sqrt{D}$; (2) Joint
  (Theorem \ref{th:joint_regret}):
  $\sqrt{(\frac{k}{S} D) T} \cdot \sqrt{k} \approx \sqrt{k^2 D/S} \approx
  \frac{k}{\sqrt{S}} \sqrt{D}$. Thus, the joint estimator improves the regret by
  a factor of $\sqrt{S}$, where $S$ is the size of the overlapping cliques. This
  quantifies the gain from exploiting the graph's \emph{overlap structure}
  (cliques) rather than just its sparsity (see Figure~\ref{fig:clique_overlap}).
}

\section{Adversarial Setting}
\label{sec:adversarial}

We also extend our framework to the adversarial full-information setting, where
loss functions $\ell_t(\cdot)$ are arbitrary convex functions chosen by an
adversary. Unlike the stochastic setting, \emph{smart} determinant-based laziness
is not possible. To satisfy a strict switching budget $M$, we propose
\textsc{Randomized Lazy OGD}, which maintains a virtual OGD iterate but updates
the deployed state $\bs_t$ only with probability $p = M/T$.

We prove this achieves $\E[\Reg_T] \le \tilde{O}(D\,G\, T / \sqrt{M})$, where $D$ is
the diameter and $G$ is the Lipschitz constant (Theorem~\ref{th:lazy_ogd}),
quantifying the \emph{price of sparsity}: regret degrades from $\sqrt{T}$ to
$T/\sqrt{M}$. An extension for heterogeneous budgets $\bM = (M_1, \dots, M_k)$
is also provided. 

In many real-world scenarios, user feedback may not be stochastic but rather
driven by changing population drifts or even adversarial behavior. We now extend
our framework to the adversarial setting.

\subsection{Problem Setup}

In this setting, the loss functions $\ell_t(\bs)$ are arbitrary convex functions
chosen by an adversary. The objective is to minimize the regret with movement
costs:
\begin{equation}
  \Reg_T = \sum_{t=1}^T \left( \ell_t(\bs_t)
    + \lambda \| \bs_t - \bs_{t-1} \|_1 \right) - \sum_{t=1}^T \ell_t(\bs^*).
  \label{eq:adv_regret}
\end{equation}
It is established that standard Online Gradient Descent (OGD) with learning rate
$\eta \propto 1/\sqrt{T}$ achieves the optimal $\wt O(\sqrt{T})$ regret for this
objective \citep{cesa2013online}. However, standard OGD updates the system state
$\bs_t$ at \emph{every} time step. In large-scale production systems, such
high-frequency updates ($T$ switches) are often operationally prohibitive.

Full Information Setting.
Unlike the stochastic bandit setting in Section~\ref{sec:algorithm}, we assume
here a \emph{Full Information} (Online Convex Optimization) setting where the
learner observes the gradient $\nabla \ell_t$ after every round. We reference
bandit lower bounds (specifically the $T^{2/3}$ rate) in the analysis below
solely for context: they serve as a useful baseline to illustrate how strictly
limiting the switching budget $M$ degrades the regret of a full-information
algorithm to rates typically associated with limited feedback.

Sparsity Constraint.
We consider a practical setting where the learner is constrained by a
\emph{switching budget} $M \ll T$. The goal is to minimize the regret in
\eqref{eq:adv_regret} subject to the constraint that the expected number of
updates is at most $M$.

\subsection{Algorithm: Randomized Lazy OGD}

To address the need for sparsity, we use a \emph{Randomized Lazy OGD}
strategy. The core idea is to decouple the learning process from the state
actuation. We maintain a virtual OGD iterate $\bw_t$ that updates at every step
to track the loss landscape, but we only update the actual deployed state
$\bs_t$ with probability $p$.

\begin{algorithm}[H]
   \caption{Randomized Lazy OGD}
   \label{alg:lazy_ogd}
\begin{algorithmic}[1]
  \STATE \textbf{Input:} Learning rate $\eta$, Switching probability $p$,
  Initial state $\bs_0 \in \sS$.
  \STATE Initialize virtual state $\bw_1 = \bs_0$.
   \FOR{$t=1$ to $T$}
   \STATE Play $\bs_t$ and observe loss function $\ell_t(\cdot)$
   (and gradient $\nabla \ell_t(\bs_t)$).
       \STATE \textbf{Virtual Update:}
       \STATE $\bw_{t+1} = \Pi_{\sS} \left( \bw_t - \eta \nabla \ell_t(\bs_t) \right)$
       \STATE \textbf{Lazy State Update:}
       \STATE Sample $z_t \sim \text{Bernoulli}(p)$.
       \IF{$z_t = 1$}
           \STATE $\bs_{t+1} = \bw_{t+1}$
       \ELSE
           \STATE $\bs_{t+1} = \bs_t$
       \ENDIF
   \ENDFOR
\end{algorithmic}
\end{algorithm}

Unlike the stochastic setting where \emph{smart} data-dependent updates are possible,
the adversarial setting fundamentally limits our ability to be lazy. We analyze
a randomized update schedule not as a novel algorithm per se, but to quantify
the unavoidable regret trade-off between stability and accuracy. This mirrors
the general trade-off between regret and movement costs (or ``service costs'') in
online convex optimization \citep{andrew2013tale}. Theorem~\ref{th:lazy_ogd}
establishes the \emph{price of sparsity}, showing how regret degrades from
$\sqrt{T}$ as the switching budget tightens, and recovering the classical
$T^{2/3}$ rate in the regime of very small switching budgets.

\subsection{Regret guarantees}

The following theorem characterizes the trade-off between the sparsity budget
$M$ and the achievable regret.

\begin{restatable}[Regret with Sparsity Budget]{theorem}{LazyOGD}
  \label{th:lazy_ogd}
  Let the loss functions $\ell_t$ be convex, $G$-Lipschitz, and defined over a
  domain of diameter $D$ (where $D$ bounds both the $\ell_1$ and $\ell_2$
  diameters). For any switching budget $M \in [1, T]$, running Algorithm
  \ref{alg:lazy_ogd} with switching probability $p = M/T$ and learning rate
  $\eta = \frac{D}{G T}\sqrt{\frac{M}{2}}$ guarantees:
  \begin{equation}
    \E[\Reg_T] \le \lambda D M + O\left( DG \frac{T}{\sqrt{M}} \right).
  \end{equation}
\end{restatable}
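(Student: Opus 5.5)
The plan is to split the expected regret into a \emph{movement} part and a \emph{prediction} part, and to bound the prediction part by coupling the deployed state $\bs_t$ to the virtual OGD iterate $\bw_t$.

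\textbf{Movement cost.} A move happens only at rounds with $z_t = 1$. Each move changes $\bs$ by at most the $\ell_1$-diameter $D$. The expected number of moves is $pT = M$. Linearity of expectation then gives $\E\bigl[\sum_t \lambda \|\bs_t - \bs_{t-1}\|_1\bigr] \le \lambda D M$. This is exactly the first term of the bound.

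\textbf{Prediction regret.} Let $\bg_t = \nabla \ell_t(\bs_t)$. By convexity,
\[
\ell_t(\bs_t) - \ell_t(\bs^*) \le \tri{\bg_t, \bs_t - \bs^*} = \tri{\bg_t, \bw_t - \bs^*} + \tri{\bg_t, \bs_t - \bw_t}.
\]
The first term is the usual OGD quantity for the virtual iterates, which are driven by the gradients $\bg_t$ regardless of where those gradients were evaluated. The standard projected-OGD telescoping argument, using nonexpansiveness of $\Pi_{\sS}$, gives
\[
\sum_t \tri{\bg_t, \bw_t - \bs^*} \le \frac{D^2}{2\eta} + \frac{\eta G^2 T}{2}.
\]
This holds deterministically, for every realization of the coin flips.

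\textbf{Lag term.} I bound the second term by $G\|\bs_t - \bw_t\|_2$. By construction $\bs_t = \bw_{t-a_t}$, where $a_t$ is the number of rounds since the last successful coin flip (with $a_t \le t$). Each virtual step moves $\bw$ by at most $\eta G$, so $\|\bs_t - \bw_t\|_2 \le \eta G a_t$. The flips $z_t$ are i.i.d.\ Bernoulli$(p)$ and independent of the loss sequence, so $a_t$ is stochastically dominated by a Geometric$(p)$ variable. Hence $\E[a_t] \le 1/p = T/M$. Summing over rounds, the expected lag contribution is at most $\eta G^2 T^2/M$.

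\textbf{Tuning $\eta$.} The total prediction bound is
\[
\frac{D^2}{2\eta} + \frac{\eta G^2 T}{2} + \frac{\eta G^2 T^2}{M}.
\]
With $\eta = \frac{D}{GT}\sqrt{M/2}$, both $D^2/(2\eta)$ and $\eta G^2T^2/M$ equal $DGT/\sqrt{2M}$. The middle term equals $DG\sqrt{M/2}/2$, which is at most $DGT/\sqrt{M}$ because $M \le T$. Adding the movement term yields $\E[\Reg_T] \le \lambda D M + O(DGT/\sqrt{M})$.

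\textbf{Main obstacle.} The delicate step is making $\E[a_t] \le 1/p$ legitimate, i.e.\ justifying that the staleness is independent of the gradients it multiplies. For an oblivious adversary this is immediate, since $\{z_t\}$ is independent of the losses. For an adaptive adversary I would avoid conditioning issues by not using a conditional expectation at all. I would instead apply the deterministic bound $\|\bs_t - \bw_t\|_2 \le \eta G a_t$ pathwise, and note that $a_t$ depends only on $z_1,\dots,z_{t-1}$, whose marginal law is Bernoulli$(p)$ irrespective of the adversary's choices. Taking expectations of the pathwise inequality then gives the same $T/M$ bound. I could also cap $a_t$ via $\|\bs_t - \bw_t\|_2 \le \min(D, \eta G a_t)$, but this cap is not needed for the stated rate.
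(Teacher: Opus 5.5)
Your proposal is correct and follows essentially the same route as the paper. Both arguments bound the movement cost by $\lambda D M$, split via convexity into the virtual OGD regret $D^2/(2\eta) + \eta G^2 T/2$ plus a coupling (lag) term controlled by geometric staleness with $\E[a_t] \le 1/p$, and then use the same choice of $\eta$. You are slightly more careful than the paper in two places: you absorb the term $\eta G^2 T/2$ using $M \le T$, where the paper simply drops it for $M \ll T$, and you justify the staleness bound pathwise so that it also covers adaptive adversaries.
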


\begin{proof}
  We analyze the expected regret of \textsc{Randomized Lazy OGD}. Let
  $f_t(\bs) = \ell_t(\bs)$. The total regret decomposes into the loss regret and
  the movement cost.
  \[
    \E[\Reg_T] = \underbrace{\E\left[ \sum_{t=1}^T (f_t(\bs_t) - f_t(\bs^*))
      \right]}_{\text{loss regret}} + \underbrace{\E\left[ \sum_{t=1}^T \lambda
        \| \bs_t - \bs_{t-1} \|_1 \right]}_{\text{movement cost}}.
  \]
  The state $\bs_t$ changes only when the Bernoulli variable $z_{t-1} =
  1$. Thus, we have
  \[
    \E[ \| \bs_t - \bs_{t-1} \|_1 ] = \Pr(z_{t-1}=1) \cdot \E[ \| \bw_t -
    \bs_{t-1} \|_1 ] \le p D.
  \]
  Summing over $T$ steps gives $\E[R_{\text{move}}] \le p T \lambda D$.  Now, to
  bound the loss regret, we use the convexity of $f_t$.
  \begin{equation}
    f_t(\bs_t) - f_t(\bs^*) \le \tri{\nabla f_t(\bs_t), \bs_t - \bs^*}
    = \underbrace{\tri{\nabla f_t(\bs_t), \bs_t - \bw_t}}_{\text{Coupling Error}}
    + \underbrace{\tri{\nabla f_t(\bs_t), \bw_t - \bs^*}}_{\text{Virtual Regret}}.
  \end{equation}
  The sequence $\bw_t$ follows standard Online Gradient Descent on the
  linearized loss functions $\tilde{f}_t(\bw) = \tri{\nabla f_t(\bs_t),
    \bw}$. Using the standard OGD bound:
  \begin{equation}
    \sum_{t=1}^T \tri{\nabla f_t(\bs_t), \bw_t - \bs^*} \le \frac{D^2}{2\eta}
    + \frac{\eta}{2} \sum_{t=1}^T \|\nabla f_t(\bs_t)\|^2 \le \frac{D^2}{2\eta}
    + \frac{\eta T G^2}{2}.
  \end{equation}
  For the Coupling Error, using the Lipschitz property
  ($ \|\nabla f_t(\bs_t)\| \le G $) and the geometric distribution of the
  staleness ($t - \tau(t)$) with mean $1/p$:
  \begin{equation}
    \E\left[ \tri{\nabla f_t(\bs_t), \bs_t - \bw_t} \right]
    \le G \, \E[ \|\bs_t - \bw_t\| ] \le \frac{\eta G^2}{p}.
  \end{equation}
  Summing over $T$ gives $\E[\text{Coupling Error}] \le \frac{T \eta
    G^2}{p}$. Combining the bounds, the total expected regret is upper bounded
  by:
  \begin{equation}
    \E[\Reg_T]
    \le \underbrace{p T \lambda D}_{\text{Movement Cost}}
    + \underbrace{\frac{D^2}{2\eta}
      + \frac{\eta T G^2}{2}}_{\text{Virtual OGD Regret}}
    + \underbrace{\frac{T \eta G^2}{p}}_{\text{Coupling Error}}.
    \label{eq:regret_decomp_budget}
  \end{equation}
  We are given a switching budget $M$. To ensure the expected number of switches
  is at most $M$, we set the switching probability to:
  \[
    p = \frac{M}{T}.
  \]
  Substituting $p=M/T$ into \eqref{eq:regret_decomp_budget}:
  \[
    \E[\Reg_T] \le M \lambda D + \left( \frac{D^2}{2\eta} + \frac{\eta T G^2}{2}
    \right) + \frac{T^2 \eta G^2}{M}.
  \]
  Assuming the sparsity regime $M \ll T$, the term $\frac{T^2 \eta G^2}{M}$
  dominates $\frac{\eta T G^2}{2}$. We simplify the objective to balancing the
  Virtual Regret (specifically the $D^2/2\eta$ term) against the Coupling Error:
  \[
    \text{Bound}(\eta) \approx M \lambda D + \frac{D^2}{2\eta} + \frac{T^2 \eta
      G^2}{M}.
  \]
  Minimizing the $\eta$-dependent terms yields the optimal learning rate:
  \[
    \frac{D^2}{2\eta^2} = \frac{T^2 G^2}{M} \implies \eta^* = \frac{D}{TG}
    \sqrt{\frac{M}{2}}.
  \]
  Substituting $\eta^*$ back into the bound:
  \[
    \frac{D^2}{2\eta^*} + \frac{T^2 \eta^* G^2}{M} = \frac{D^2 T G}{2 D
      \sqrt{M/2}} + \frac{T^2 G^2}{M} \frac{D}{TG} \sqrt{\frac{M}{2}} = \sqrt{2}
    \frac{DTG}{\sqrt{M}}.
  \]
  Thus, the final bound is:
  \[
    \E[\Reg_T] \le M \lambda D + O\left( DG \frac{T}{\sqrt{M}} \right).
  \]
  This completes the proof.
\end{proof}

This result quantifies the \emph{price of sparsity}:
\begin{itemize}

\item Unconstrained ($M=T$): The algorithm reduces to standard Online Gradient
  Descent. While this yields optimal prediction regret ($O(\sqrt{T})$), the
  bound on movement costs becomes linear ($O(T)$).
\item Optimal Trade-off ($M = \Theta(T^{2/3})$): We recover a total regret rate
  of $O(T^{2/3})$. This matches the minimax rate for bandit problems with
  switching costs.
\item High Sparsity ($M = \sqrt{T}$): The regret degrades to $O(T^{3/4})$,
  illustrating the unavoidable cost of freezing the state in an adversarial
  environment.
\end{itemize}

Unlike the stochastic setting where ``smart'' data-dependent updates are
possible, the adversarial setting limits our ability to exploit stable epochs.
We analyze a randomized update schedule to quantify the regret trade-off
inherent to \emph{lazy} gradient-based strategies.  While perturbation-based
methods \citep{kalai2005efficient} can achieve $\wt O(\sqrt{T})$ regret with
$\wt O(\sqrt{T})$ switches for linear losses, they often require full
optimization oracles.  In contrast, our approach applies efficiently to
\emph{general convex losses}.  Theorem~\ref{th:lazy_ogd} establishes the
\emph{price of sparsity} for this lazy strategy, showing how regret degrades
from $\sqrt{T}$ as the switching budget tightens.  Note that, under the optimal
switching budget ($M \approx T^{2/3}$), this full-information strategy recovers
the $O(T^{2/3})$ rate, mirroring the minimax lower bound for bandits with
switching costs \citep{dekel2014bandits}.  \ignore{ It is worth noting the
  distinction between this adversarial strategy and the stochastic strategy
  proposed in Section \ref{sec:algorithm}. In the adversarial setting, we use a
  fixed randomized $p$ to satisfy a global budget $M$. In contrast, our
  stochastic algorithm, \textsc{LazyGraphLinUCB}, uses a \emph{time-varying,
    data-dependent} schedule ($\det(\bV_{t,i}) > 2 \det(\bV_{\mathrm{last}, i})$). The
  stochastic setting allows for ``smart'' laziness (waiting for information),
  whereas the adversarial setting requires ``randomized'' laziness (to satisfy
  the budget without being predictable).}  The parameter $p$ explicitly controls
the trade-off between the system's stability and its ability to track the
optimal parameters. This can be viewed as an optimization problem. From the
proof of Theorem \ref{th:lazy_ogd}, the upper bound on the regret is
approximately:
\begin{equation}
  \text{Bound}(p) \approx \underbrace{p T \lambda D}_{\text{Movement Cost}}
  + \underbrace{\frac{\eta T G^2}{p}}_{\text{Coupling Error}} + C.
\end{equation}
The first term reflects the penalty for instability: a higher $p$ leads to more
frequent switches. The second term reflects the \emph{coupling error}, the
divergence between the active system state $\bs_t$ and the virtual learner
$\bw_t$. Minimizing this bound with respect to $p$ (for a fixed learning rate
$\eta$) yields the optimal fixed switching probability:
\begin{equation}
  p^* \propto \sqrt{\frac{\eta G^2}{\lambda D}}.
\end{equation}
This result offers a principled heuristic: the switching frequency should be
inversely proportional to the square root of the movement penalty $\lambda$.

\subsection{Extension: Heterogeneous Stability Budgets}
\label{sec:adversarial_extension}

The standard \textsc{Randomized Lazy OGD} applies a uniform switching
probability $p$ across all dimensions. However, in the conflicting objectives
setting, different criteria often possess different stability requirements. For
example, a fairness threshold might require strict stability (low update
frequency) to avoid public outcry, while an internal mixing rate might tolerate
frequent updates.  We can extend our framework to support \emph{heterogeneous
  sparsity budgets}. Let $\bs$ be decomposed into $k$ independent components (or
blocks) with distinct switching budgets $\bM = (M_1, \dots, M_k)$. We assume the
loss function decomposes additively,
$\ell_t(\bs) = \sum_{i=1}^k \ell_{t,i}(s_i)$, which is consistent with the
graph-structured formulation in Section~\ref{sec:formulation}
(where blocks correspond to disjoint cliques).

\textbf{Algorithm: Component-Wise Lazy OGD.}  We run $k$ parallel
instances of the virtual update logic. For each component $i$:\\
1. Maintain virtual iterate $w_{t,i}$.\\
2. Sample switching variable $z_{t,i} \sim \text{Bernoulli}(p_i)$
independent of other components.\\
3. Update deployed state $s_{t,i} \leftarrow w_{t,i}$ only if $z_{t,i}=1$.

\textbf{Theoretical Guarantee.}  By setting $p_i = M_i / T$ and tuning
component-specific learning rates $\eta_i$, we achieve a refined regret bound
that sums the localized costs rather than scaling with the global dimension.
\begin{corollary}[Regret with Heterogeneous Budgets]
  Let the loss $\ell_t$ decompose into $k$ components with diameters $D_i$ and
  Lipschitz constants $G_i$. Given a vector of budgets $\bM$, the expected
  regret is bounded by:
  \begin{equation}
    \E[\Reg_T] \le \sum_{i=1}^k \left( \lambda D_i M_i
      + O\left( D_i G_i \frac{T}{\sqrt{M_i}} \right) \right).
  \end{equation}
\end{corollary}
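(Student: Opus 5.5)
The plan is to reduce the corollary to $k$ independent applications of Theorem~\ref{th:lazy_ogd}, one per component, and then add the bounds. The loss decomposes additively, $\ell_t(\bs) = \sum_{i=1}^k \ell_{t,i}(s_i)$, and the domain is a product $\sS = \prod_i \sS_i$. Hence the comparator $\bs^*$ can be taken componentwise, $s^*_i \in \argmin_{s_i \in \sS_i} \sum_t \ell_{t,i}(s_i)$, and we have
\[
\sum_t \ell_t(\bs_t) - \sum_t \ell_t(\bs^*) = \sum_{i=1}^k \Big( \sum_t \ell_{t,i}(s_{t,i}) - \sum_t \ell_{t,i}(s^*_i) \Big).
\]
The $\ell_1$ movement cost also separates exactly: $\lambda\|\bs_t - \bs_{t-1}\|_1 = \sum_i \lambda |s_{t,i} - s_{t-1,i}|$ (for block components, replace $|\cdot|$ with the block $\ell_1$ norm). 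Therefore $\Reg_T = \sum_i \Reg_T^{(i)}$, where $\Reg_T^{(i)}$ is the movement-cost regret of the $i$-th one-component game. Taking expectations and using linearity gives $\E[\Reg_T] = \sum_i \E[\Reg_T^{(i)}]$.

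Next I check that component $i$ of Component-Wise Lazy OGD is exactly Algorithm~\ref{alg:lazy_ogd} run on the sequence $(\ell_{t,i})_t$ over $\sS_i$. Projection onto a product set factorizes, so the virtual update of $\bw_{t+1}$ splits into per-component projected steps with step size $\eta_i$. The gradient with respect to $s_i$ is $\nabla \ell_{t,i}(s_{t,i})$, which depends only on $s_{t,i}$ by additivity. The deployed update for component $i$ uses only $z_{t,i}$. So the trajectory $(s_{t,i}, w_{t,i})$ is measurable with respect to the $i$-th component's own coins and the (oblivious) loss sequence. The proof of Theorem~\ref{th:lazy_ogd} never uses independence across components. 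It uses only convexity, the $G_i$-Lipschitz bound, the diameter $D_i$, and the geometric staleness of the component's own Bernoulli$(p_i)$ schedule. The same proof therefore applies verbatim with $p_i = M_i/T$ and $\eta_i = \frac{D_i}{G_i T}\sqrt{M_i/2}$, and gives
\[
\E[\Reg_T^{(i)}] \le \lambda D_i M_i + O\big(D_i G_i T/\sqrt{M_i}\big).
\]
Summing over $i$ yields the claim.

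The main obstacle is that the adversary and the Lipschitz and diameter conventions must be compatible with this splitting. First, if the adversary is adaptive, its choice of $\ell_{t,i}$ may depend on other components' past states. I would handle this by noting that the OGD bound is deterministic given the gradients. The coupling-error bound $\E\|s_{t,i} - w_{t,i}\| \le \eta_i G_i/p_i$ conditions only on the history up to the last switch of component $i$, and $z_{t,i}$ is independent of that history. The staleness argument therefore still goes through conditionally. Second, I need $G_i$ to bound the component gradient norm and $D_i$ to bound both the $\ell_1$ and $\ell_2$ diameters of $\sS_i$, mirroring the hypotheses of Theorem~\ref{th:lazy_ogd}. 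I would state these assumptions explicitly as part of the component-wise setup.
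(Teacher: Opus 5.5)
Your proposal is correct and matches the paper's (implicit) argument. The paper gives no separate proof; it just says to set $p_i = M_i/T$ and tune $\eta_i$ per component. You make that precise by splitting both the comparator regret and the $\ell_1$ movement cost additively over components, applying Theorem~\ref{th:lazy_ogd} to each component with $\eta_i = \frac{D_i}{G_i T}\sqrt{M_i/2}$, and summing.

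Your adaptive-adversary caveat is harmless but not needed. The staleness bound $\|s_{t,i}-w_{t,i}\| \le \eta_i G_i\,(t-\tau_i(t))$ and the switch indicators hold pointwise and depend only on component $i$'s own coins. Likewise, the OGD and convexity inequalities hold for every comparator simultaneously, so the bound holds regardless of how the losses are chosen.
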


This extension is significant because it allows the system designer to
\emph{spend} sparsity where it is most needed. Unlike the global strategy which
pays a regret penalty governed by the worst-case stability constraint
($M_{min}$), the component-wise strategy isolates the cost of stability to the
specific criteria that require it.

\begin{figure*}[t]
  \centering
  \begin{minipage}[t]{0.48\textwidth}
    \centering
    \includegraphics[width=\textwidth]{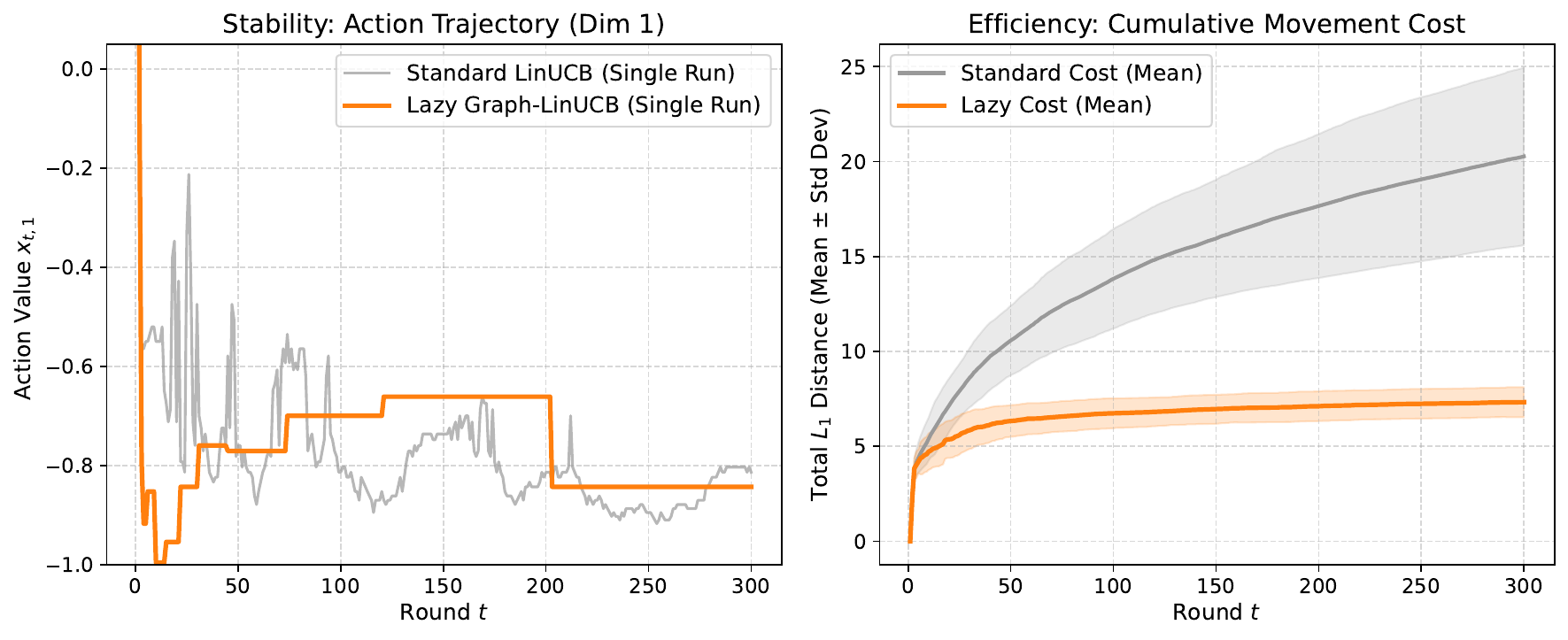}
    \vskip -.1in
    \caption{\textbf{Impact of Lazy Updates.} (Left) The trajectory of the
      selected action $s_t$ (dimension 1). Standard LinUCB (grey) exhibits
      constant jitter, while \textsc{LazyGraphLinUCB} (orange) freezes the
      action. (Right) Cumulative movement cost averaged over 100 runs. The
      standard algorithm accumulates linear cost ($\approx 22$), while the lazy
      strategy saturates ($\approx 8$).}
    \label{fig:synthetic}
        
    \end{minipage}
    \hfill
    \begin{minipage}[t]{0.5\textwidth}
      \centering
      \includegraphics[width=\textwidth]{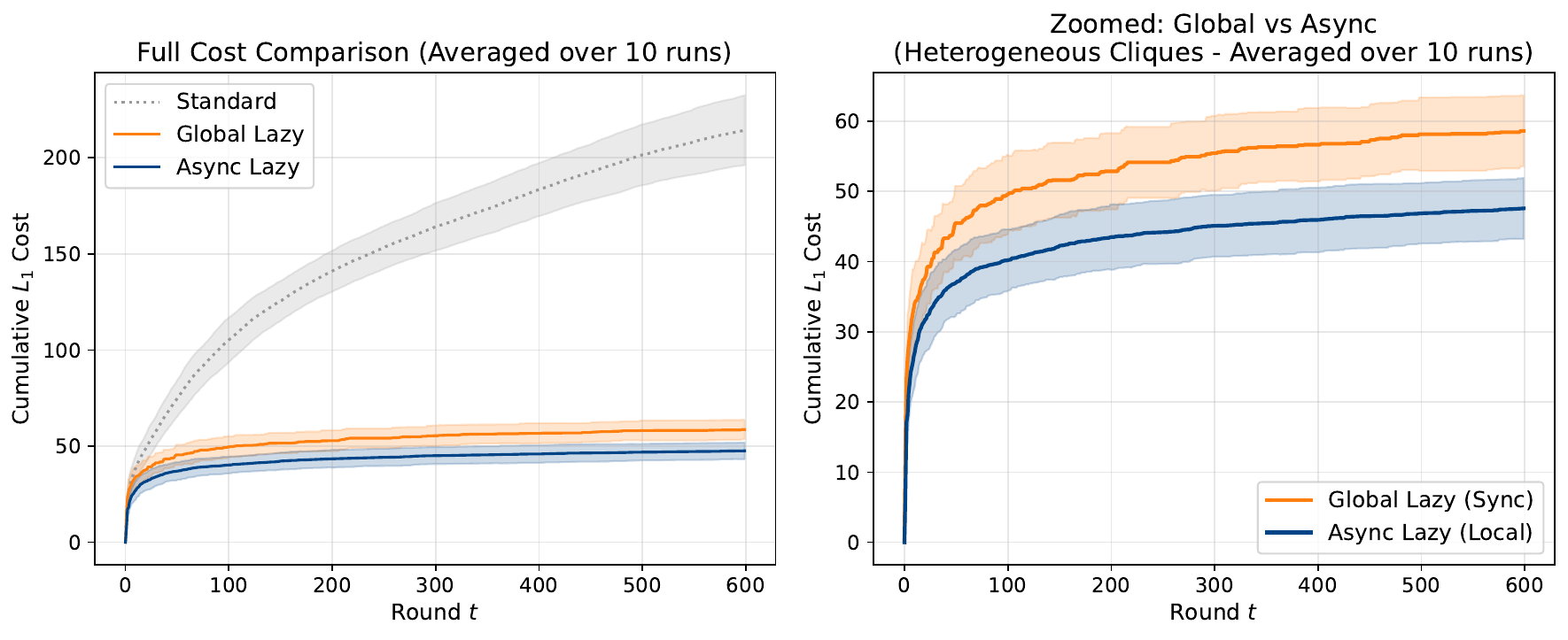}
      \vskip -.1in \caption{\textbf{Scalability of Asynchronous Updates.} (Left)
        Comparison against the standard baseline. (Right) Zoomed comparison of
        lazy strategies. \textsc{Global Lazy} (orange) suffers from
        synchronization penalties, while \textsc{Async Lazy} (blue) decouples
        components, reducing movement costs by $\mathbf{3.1\times}$ (see
        Table~\ref{tab:trigger_analysis}).}
      \label{fig:graph_experiment}
    \end{minipage}
    \vskip -.2in
  \end{figure*}

  \section{Numerical Illustrations}
  \label{sec:experiments}

  To validate our theoretical findings, we conducted two synthetic experiments
  and one real-world application. Our primary contribution is theoretical; these
  experiments are not large-scale benchmarks, but numerical illustrations
  isolating and validating specific mechanisms predicted by our analysis: the
  reduction of jitter and the elimination of synchronization overhead.

  \subsection{Single-Clique Stability Analysis}
  \label{sec:exp_single_clique}

  We first compared \textsc{LazyGraphLinUCB} against standard LinUCB on a simple
  $d=2$ dimensional linear bandit problem over $T=300$ rounds. The goal was to
  minimize a noisy linear loss function
  $\ell_t(s) = \langle s, \theta^* \rangle + \mathcal{N}_t(0, \sigma^2)$. The
  parameters used were $\lambda_{\mathrm{reg}}=0.1$, $\beta=0.5$ for the
  exploration term, and noise parameter $\sigma=0.5$. Both algorithms rely on
  the same regularized least-squares estimator; the critical difference lies in
  the update schedule. Standard LinUCB updates its policy at every step to the
  minimizer of the current LCB, whereas \textsc{LazyGraphLinUCB} only updates
  when the determinant of the covariance matrix doubles. We note that the
  cumulative prediction regret for both algorithms was statistically
  indistinguishable, confirming that the stability gains did not come at the
  cost of accuracy.

  \textbf{Stability vs.\ Jitter.} Figure~\ref{fig:synthetic} (Left) illustrates
  the core mechanism of our algorithm. Because the estimator $\h \theta_t$
  fluctuates with every noisy sample, the standard LinUCB action $s_t$ jitters
  constantly. In contrast, \textsc{LazyGraphLinUCB} freezes the action for
  stable epochs. The initial fluctuations in the first $\approx 20$ rounds
  correspond to the rapid shrinkage of the confidence ellipsoid (where the
  determinant doubles frequently). However, as $t$ increases, the updates become
  exponentially rare.

  \textbf{Cost Reduction.} Figure~\ref{fig:synthetic} (Right) confirms our
  regret bounds. The standard algorithm incurs linear movement cost
  ($\Omega(T)$), making it unusable in systems with high switching
  penalties. Our lazy algorithm maintains logarithmic movement cost
  ($O(\log T)$) while provably retaining the optimal $\wt O(\sqrt{T})$
  prediction accuracy. (Note: In our experiments, the cumulative prediction
  regret of the Lazy algorithm was statistically indistinguishable from the
  Standard algorithm; we omit the redundant learning curves for brevity.)

  \subsection{Heterogeneous Graph Scalability}
  \label{sec:exp_hetero_graph}

  To validate the benefits of our graph-structured approach, we simulated a
  larger system with $k = 10$ independent criteria (cliques), each of dimension
  $d=2$. To mirror real-world conditions where different objectives learn at
  different rates (e.g., a noisy \emph{CTR} objective vs.\ a stable \emph{Latency}
  objective), we introduced heterogeneity by assigning a random stiffness
  parameter $\lambda_{\mathrm{reg},i} \in [0.01, 2.0]$ to each clique. The
  higher the $\lambda_{\mathrm{reg},i}$, the longer it will take for a clique to
  reach its triggering threshold.

  The results in Figure~\ref{fig:graph_experiment} highlight the critical
  importance of the asynchronous extension
  (Section~\ref{sec:async_extension}). Under \textsc{Global Lazy}, all cliques
  are forced to update whenever \emph{any} criterion triggers, so even
  stiff (slow-learning) cliques update frequently despite gathering
  little new information. \textsc{Async Lazy} (Theorem~\ref{th:async_regret}) eliminates
  this waste. As shown in Table~\ref{tab:trigger_analysis}, stable components
  (e.g., Clique 9) incur a $3.7\times$ waste factor under the global strategy as it is updated 41 times vs.\ 11 times under the asynchronous strategy;
  decoupling updates reduces total system-wide moves by isolating changes to the
  relevant local neighborhood.

\begin{table}[h]
  \centering
  \caption{\textbf{Mechanism of Efficiency (Excerpt).} Comparison of update
    counts for representative cliques. Volatile cliques update frequently in
    both strategies; stable cliques freeze in Async but are forced to move in
    Global. See Appendix~\ref{app:trigger_analysis} for the full
    breakdown of all 10 cliques.}
  \label{tab:trigger_analysis}
  \vspace{0.1cm}
  \begin{small}
    \begin{tabular}{lcccc}
      \toprule
      \textbf{Clique Type} & \textbf{Stiffness,} & \textbf{Global,}
      & \textbf{Async,} & \textbf{Waste} \\
       & \textbf{$\lambda_{\mathrm{reg}}$} & \textbf{moves} & \textbf{moves} & \textbf{factor}\\
      \midrule
      Volatile (ID 4) & 0.19 & 41 & 15 & 2.7$\times$ \\
      Stable (ID 9) & 1.40 & 41 & 11 & \textbf{3.7$\times$} \\
      \midrule
      \textbf{Total (All 10)} & - & \textbf{410} & \textbf{131}
                       & \textbf{3.1$\times$} \\
      \bottomrule
    \end{tabular}
  \end{small}
\end{table}

\subsection{Real-World Data: Fairness Threshold Tuning}
\label{sec:exp_adult}

To validate our framework in a realistic setting, we applied it to a fairness
tuning task using the \emph{Adult Census dataset}. We trained a logistic
regression classifier to predict income $> \$50$K and formulated the problem of
tuning the decision threshold $\tau \in [0, 1]$ as a bandit problem with
inherent conflicting objectives: maximizing accuracy of the income prediction
while minimizing the Equal Opportunity gap. Specifically, we scalarize the
global objective as the sum of the error rate and the fairness violation:
\begin{equation*}
  L(\tau) = (1-\text{Accuracy}(\tau))
  +|\text{TPR}_{\text{male}}(\tau) - \text{TPR}_{\text{female}}(\tau)|.
\end{equation*}
To capture the non-linear dependence of these metrics on the threshold, we use a
quadratic feature map $\phi(\tau) = [1, \tau, \tau^2]^\top$.  We initialized the
threshold at a low value ($\tau=0.2$) to observe the learning trajectory as the
agent discovers the high-threshold region ($\tau \approx 0.9$) required for
fairness. We used $\lambda_{\mathrm{reg}} = 5$ to initialize the covariance matrix
$\bV$. To simulate realistic feedback variability, we added zero-mean Gaussian
noise ($\sigma=0.05$) to the observed losses. While this experiment focuses on a
single parameter ($k=1$) and thus does not exercise the graph sparsity
mechanisms (illustrated in Section~\ref{sec:exp_hetero_graph}), it serves to
empirically validate the efficiency and stability of the continuous-state
\textsc{Lazy} formulation on a realistic, non-convex loss surface, complementing
the structural verification in the synthetic experiments.

\textbf{Results.} Figure~\ref{fig:adult_experiment} (Left) shows the mean
threshold trajectory averaged over 10 random seeds. Both algorithms successfully
converge to the fair region ($\tau \approx 0.9$). However, the standard LinUCB
algorithm (grey) exhibits high variance and frequent oscillations due to
sensitivity to batch noise. In contrast, \textsc{LazyGraphLinUCB} (blue)
demonstrates a stable learning curve with tight confidence bands, filtering out
transient noise.

Figure~\ref{fig:adult_experiment} (Right) demonstrates that the stability
translates directly to operational efficiency. In this context, we define
operational efficiency as the minimization of \emph{policy churn} (unnecessary
parameter updates that trigger system maintenance costs (e.g., cache
invalidation or downstream re-computations) without yielding performance
gains). The Lazy strategy reduces the average cumulative movement cost by a
factor of \textbf{5.7$\times$} (14.0 vs 2.5) compared to the standard LinUCB greedy
approach. The error bars confirm that this gain is statistically significant and
robust across different random initializations.

\section{Limitations}
\label{sec:limitations}

\begin{figure}[t]
  \centering
  \includegraphics[width=0.48\textwidth]{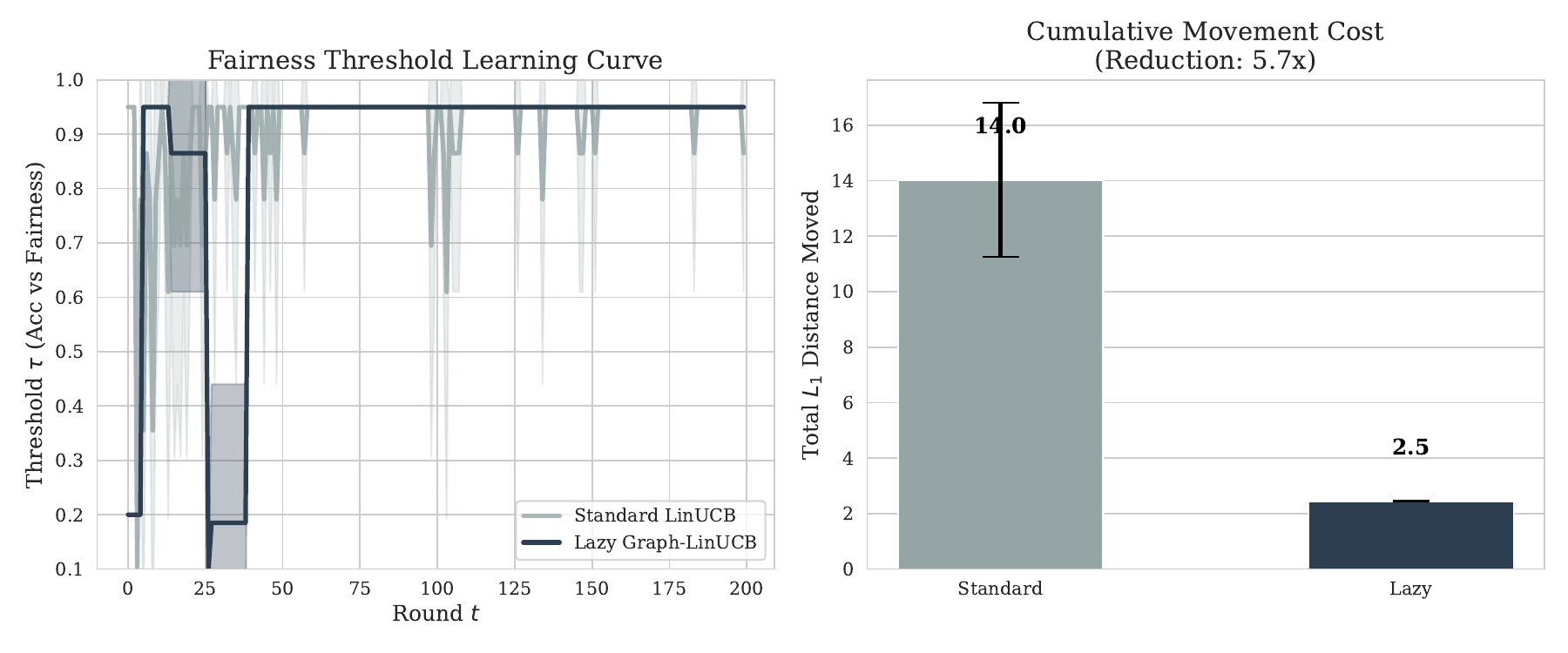}
  \caption{\textbf{Fairness Threshold Tuning on Adult Dataset.} (Left) Mean
    learning trajectory ($\pm 1$ std.\ dev.) over 10 independent trials. The
    Lazy algorithm (blue) consistently tracks the optimal threshold with
    significantly reduced variance compared to the standard LinUCB approach
    (grey). (Right) Cumulative movement cost. The Lazy strategy significantly
    reduces operational churn, achieving a $\approx 5.7\times$ reduction in
    total movement cost.}
  \vskip -.2in
  \label{fig:adult_experiment}
\end{figure}

We discuss the main assumptions and limitations of our framework.  First,
\emph{non-convex LCB optimization}: minimizing the LCB objective
\eqref{eq:10} is non-convex in general; in dense graphs where the local
dimension $d_i$ is large, even approximate solvers may be costly. The lazy
schedule amortizes this cost (at most $O(d_{\max} k \log T)$ solves over $T$
rounds), and Section~\ref{sec:async_extension} restricts solutions to
low-dimensional local neighborhoods.  Second, \emph{stationarity of the
  dependency graph}: the adaptive graph learning mechanism
(Section~\ref{sec:graph_learning}) assumes the underlying dependency structure
is fixed. If dependencies drift over time, a sliding-window or discounted
covariance approach would be needed.  Third, \emph{Assumption A in the
  asynchronous algorithm}: the PL condition (formally justified in
Lemma~\ref{lemma:assumption-a-justified}) holds when $\lambda_{\mathrm{reg}}$ is
sufficiently large; in practice this trades exploration for convergence speed.
Finally, our experiments focus on synthetic benchmarks and a single-parameter
real-world task ($k=1$); large-scale evaluation of the graph mechanisms on real
multi-criterion systems is left to future work.

\section{Conclusion}
\label{sec:conclusion}

We presented a comprehensive extension of the feedback-driven competing
objectives framework to continuous state spaces. Replacing binary states with a
continuous domain and modeling local dependencies via graph-structured linear
function approximation yields a more realistic model for tuning hyperparameters
such as fairness thresholds and diversity rates. We addressed stability via
movement costs, showing that a \emph{lazy} update strategy achieves sub-linear
regret in both stochastic and adversarial settings. Furthermore, the dependency
graph is not merely a constraint but a source of efficiency: an
\emph{asynchronous} schedule minimizes movement in sparse systems, an
\emph{adaptive} algorithm learns dependencies from data, and a
\emph{factor-graph} decomposition tightens regret bounds.  Future work includes
large-scale empirical evaluation and extending the factor-graph analysis to
non-linear function approximators.

\ignore{
\newpage
\section*{Impact Statement}
This work contributes to the reliability of machine learning systems by
providing rigorous methods for multi-objective optimization. In particular, our
framework allows for the stable tuning of sensitive parameters, such as fairness
thresholds, potentially reducing the risk of erratic system behavior that could
negatively impact users. We do not foresee immediate negative societal
consequences, though as with any optimization tool, the outcome depends on the
validity of the chosen objectives.
}

\section*{Acknowledgments}

YM's work is supported in part by European Research Council (ERC) under the
European Union’s Horizon 2020 research and innovation program (grant agreement
No. 882396), by the Israel Science Foundation, the Yandex Initiative for Machine
Learning at Tel Aviv University and a grant from the Tel Aviv University Center
for AI and Data Science (TAD).

\newpage
\bibliographystyle{abbrvnat}
\bibliography{cfair}

\newpage
\appendix
\onecolumn

\renewcommand{\contentsname}{Contents of Appendix}
\tableofcontents
\addtocontents{toc}{\protect\setcounter{tocdepth}{3}}
\clearpage

\section{Extended Related Work}
\label{app:extended_related}

Our work extends and connects to a broad literature on multi-objective
optimization, algorithmic fairness, and online learning. We survey the most
relevant threads below, explaining how they motivate our continuous-state
formulation and contrast with it.

\subsection{Multi-Objective Optimization and Competing Criteria}

There is a vast body of literature on optimizing multiple metrics or objectives
simultaneously. Works by \citet{mohri2019agnostic} and
\citet{cortes2020agnostic} design ``agnostic'' algorithms that compete with any
linear or convex combination of a fixed set of base objectives
$L_1, \dots, L_k$. Another major line of research seeks Pareto-optimal solutions
\citep{jin2008pareto, sener2018multi, shah2016pareto, marler2004survey}. In
multi-task learning, objective conflicts are commonly addressed via gradient
manipulation or constrained optimization \citep{sener2018multi}. The key
difference with our framework is that we do \emph{not} commit to a fixed linear
combination or Pareto front; instead, we learn from user feedback which
configuration of the continuous parameters best satisfies the objectives.

A closely related and direct predecessor of the current paper is the work of
\citet*{AwasthiCortesMansourMohri2024}, which introduced the feedback-driven
multi-criteria model using an incompatibility graph and binary states. The
present work extends that model to continuous state spaces, replaces binary
``fix/unfix'' actions with continuous parameter tuning, and replaces MDP state
enumeration with graph-structured linear bandit learning. The journal version of
that work \citep{AwasthiCortesMansourMohri2026} further develops the adversarial
setting, which we adapt in Section~\ref{sec:adversarial}.

\subsection{Algorithmic Fairness: Classification, Ranking, and Clustering}

The problem of satisfying multiple fairness constraints has been extensively
studied. In the classification setting, \citet{agarwal2018reductions} reduce the
fair classification problem to a sequence of cost-sensitive classification
problems, and \citet{cotter2018training, CotterJiangSridharan2019} use
two-player game formulations to handle non-convex constraints. These works
typically tailor algorithms to specific group fairness notions (e.g.,
demographic parity, equal opportunity) as fixed binary constraints.  In
contrast, our framework treats \emph{all} criteria uniformly and optimizes their
continuous thresholds jointly.

Fairness in ranking \citep{celis_fair_ranking, beutel2019fairness,
  narasimhan2019pairwise} and clustering \citep{fair_clustering_Nips2017,
  schmidt2018fair, backurs2019scalable} has also received significant attention,
typically with fixed fairness criteria. Work on individual fairness
\citep{dwork2012fairness, kearns2019average} considers settings where fairness
is a distributional condition over individuals.

The motivation for our continuous-state model is precisely that real-world
fairness interventions involve \emph{thresholds}—e.g., the maximum allowed gap
in True Positive Rates between demographic groups—rather than binary
constraints. Setting $\tau = 0.05$ vs.\ $\tau = 0.10$ produces meaningfully
different system behavior, and the optimal threshold must be learned from
deployed-system feedback. This is the regime our model directly addresses.

\subsection{Inherent Tension Between Multiple Metrics}

Several works demonstrate that multiple ``reasonable'' fairness criteria are
fundamentally incompatible.  \citet{kleinberg2017} prove that calibration and
equal opportunity cannot be simultaneously satisfied except in degenerate cases.
\citet{feller2016computer} discuss the COMPAS recidivism tool, where optimizing
one group's false positive rate necessarily worsens another
metric. \citet{menon2018cost} study the trade-off between accuracy and false
positive/negative rates.

Our dependency graph $\cG$ directly encodes these incompatibility structures: an
edge $(i, j) \in E$ means that optimizing criterion $i$ affects criterion $j$,
making the pair impossible to independently optimize. This graph-theoretic
encoding allows us to leverage sparse incompatibility structure for efficient
continuous optimization, generalizing the incompatibility graphs of
\citet*{AwasthiCortesMansourMohri2024} to the continuous-threshold setting.

\subsection{Long-Term Feedback Dynamics and MDP-Based Fairness}

Several works study the long-term consequences of optimizing multiple
conflicting criteria. \citet{liu2018delayed} show that minimizing a constrained
loss to equalize certain criteria can \emph{increase} disparate impact in the
long run. \citet{hashimoto2018fairness} develop algorithms for repeated loss
minimization that minimize such disparate impact. These works highlight that the
interaction between objectives plays out over time, motivating a dynamic,
feedback-driven framework rather than static optimization.

Closer to our MDP-based setting, \citet{jabbari2017fairness} study reinforcement
learning in MDPs subject to fairness constraints: the algorithm may not take
action $a$ over $a'$ if the long-term reward of $a$ is lower than $a'$. They
show that finding a near-optimal policy satisfying this criterion requires time
exponential in the state-space size—underscoring the importance of
\emph{structure} (our dependency graph) for efficient learning.
\citet{doroudi2017importance} show that off-policy importance sampling can
violate natural fairness criteria, and present corrective algorithms.

Our work differs from this line in a crucial respect: we do \emph{not} commit to
a fixed definition of quality or a fairness metric. The states in our MDP
correspond to \emph{configurations of continuous parameters}, and the graph
encodes which parameters interact. This makes our framework applicable to
arbitrary criteria (fairness, diversity, latency, revenue) without
specialization, and our algorithms are agnostic to the definition of the
criteria themselves.

\subsection{Fairness in Online Learning and Bandits}

Fairness constraints in online and bandit settings have been studied by
\citet{joseph2016fairness} (meritocratic fairness in classic and contextual
bandits), \citet{gillen2018online} (online learning with an unknown fairness
metric), and \citet{LiuRadanovicDimitrakakisMandalParkes2017} (calibrated
fairness in bandits). These works define specific fairness conditions and aim to
satisfy them while minimizing regret. Our framework generalizes beyond fairness
bandits by treating each criterion (including fairness) as a competing linear
objective with a continuous threshold, and by explicitly modeling the
\emph{movement costs} incurred when thresholds are adjusted.

\subsection{Monitoring, Auditing, and Feedback Mechanisms}

A key assumption of our stochastic model is that the algorithm can observe the
losses associated with different criteria at each time step. In practice, this
relates to the problem of monitoring and auditing deployed systems. There has
been considerable work on developing probabilistic verification of fairness
properties \citep{bastani2019probabilistic}, AI fairness toolkits
\citep{bellamy2018ai}, and learning from noisy sensitive attributes
\citep{coston2019fair, lamy2019noise, wang2020robust}. Our framework abstracts
this monitoring layer: we assume a pre-processing step (possibly using the tools
above) that produces per-criterion loss signals, and we focus on the online
optimization problem that follows.  The counterfactual explanation work of
\citet{tsirtsis2020decisions} and recourse algorithms of
\citet{gupta2019equalizing} complement our approach by providing actionable
interpretation of the decisions made by a system optimized by our framework.

\section{Preliminaries}
\label{app:prelims}

We denote the set of criteria (vertices) by $\cV = \{1, \dots, k\}$. The system
state is represented by a vector $\bs \in \sS = [0, 1]^k$. We assume the system
starts at an arbitrary initial state $\bs_0 \in \sS$ (e.g., the zero vector)
chosen by the learner. For a vector $\bx \in \Rset^d$, we denote its
$\ell_2$-norm by $\|\bx\|_2$ and its weighted norm with respect to a positive
definite matrix $\bA$ by $\|\bx\|_{\bA} = \sqrt{\bx^\top \bA \bx}$.

\textbf{Dependency Graph.} We assume the existence of an underlying undirected
graph $\cG = (\cV, E)$ that captures the dependency structure between
criteria. We denote by $\cN(i)$ the neighborhood of vertex $i$ in $\cG$,
including $i$ itself. The graph represents dependencies between losses: the loss
for criterion $i$ depends only on the state attributes of its neighbors
$j \in \cN(i)$. A criterion $i$ is insensitive to changes in criteria
$j \notin \cN(i)$. And example is provided in Figure~\ref{fig:graph_structure} in the main paper.

\subsection{Background: Linear Stochastic Bandits (LinUCB)}
\label{sec:linucb_background}

The core building block of our approach is the Linear Upper Confidence Bound
(LinUCB) algorithm, originally popularized by \citet{li2010contextual} for
personalized recommendation. In the standard linear bandit setting, at each
round $t$, the learner chooses an action $x_t$ from a decision set
$\cD_t \subset \Rset^d$ and observes a reward
$y_t = \langle x_t, \theta^* \rangle + \eta_t$, where $\eta_t$ is zero-mean
sub-Gaussian noise and $\theta^* \in \Rset^d$ is an unknown parameter
vector. The goal is to maximize the cumulative reward (or equivalently, minimize
regret).  Because $\theta^*$ is unknown, the learner must balance
\emph{exploitation} (choosing actions that appear best) with \emph{exploration}
(choosing uncertain actions to learn $\theta^*$).  LinUCB solves this using the
principle of \emph{Optimism in the Face of Uncertainty} (OFUL).

Ridge Regression Estimator. The algorithm maintains a regularized least-squares
estimate of $\theta^*$.  Let $\bX_t$ be the matrix of actions played up to time
$t$, and $\bY_t$ be the vector of observed rewards. The estimator is:
\begin{equation}
  \h \theta_t
  = \bV_t^{-1} \sum_{\tau=1}^t x_\tau y_\tau, \quad \text{where} \quad \bV_t
  = \lambda_{\mathrm{reg}} \bI + \sum_{\tau=1}^t x_\tau x_\tau^\top.
\end{equation}
Here, $\bV_t$ is the covariance matrix (or Gram matrix) which captures the
certainty of the learner in different directions of the feature space.

Confidence Ellipsoids.  To analyze the regret, we rely on the self-normalized
martingale bounds derived by \citet{abbasi2011improved}. They showed that with
high probability, the true parameter $\theta^*$ lies within an ellipsoid
centered at $\h \theta_t$:
\begin{equation}
  \cC_t = \{ \theta \in \Rset^d \colon
  \| \theta - \h \theta_t \|_{\bV_t} \le \beta_t \},
\end{equation}
where $\| z \|_{\bA} = \sqrt{z^\top \bA z}$ is the Mahalanobis norm and
$\beta_t$ is a radius parameter scaling with $\sqrt{d \log t}$. Geometrically,
directions corresponding to ``large'' eigenvectors of $\bV_t$ (directions
explored frequently) have narrow confidence intervals, while unexplored
directions remain uncertain.

\textbf{Decision Rule.}  LinUCB selects the action that maximizes the reward
plausible under the confidence set:
\begin{equation}
  x_{t+1}
  = \operatorname*{argmax}_{x \in \cD_t} \max_{\theta \in \cC_t} \langle x, \theta \rangle
  = \operatorname*{argmax}_{x \in \cD_t} \left( \langle x, \h \theta_t \rangle
    + \beta_t \| x \|_{\bV_t^{-1}} \right).
\end{equation}
The term $\langle x, \h \theta_t \rangle$ represents the expected reward, while
$\beta_t \| x \|_{\bV_t^{-1}}$ is the exploration bonus.  Our algorithm,
\textsc{LazyGraphLinUCB}, adapts this framework by maintaining $k$ local
estimators and freezing the active policy $\bs_{\mathrm{active}}$ while monitoring the
covariance matrix $\bV_t$ to minimize movement costs.

\newpage
\section{Detailed Proofs}
\label{app:detailed_proofs}

\subsection{Lemma~\ref{lemma:confidence-ellipsoid}}

\begin{restatable}[Confidence Ellipsoid
  \citep{abbasi2011improved}]{lemma}{ConfidenceEllipsoid}
  \label{lemma:confidence-ellipsoid}
  For any $\delta \in (0, 1)$, with probability at least $1 - \delta$, for all
  $t \ge 0$ and all $i \in [k]$, the true parameter $\btheta^*_i$ satisfies:
  \begin{equation}
    \| \h \btheta_{t,i} - \btheta^*_i \|_{\bV_{t,i}} \le \beta_{t,i},
  \end{equation}
  where
  $\beta_{t,i} = \sigma \sqrt{d_i \log \left( \frac{1 + t L^2 /
        \lambda_{\mathrm{reg}}}{\delta} \right)} + \lambda_{\mathrm{reg}}^{1/2}
  \|\btheta^*_i\|_2$.
\end{restatable}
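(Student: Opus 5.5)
The plan is to reduce, for each fixed criterion $i$, to the self-normalized martingale bound of \citet{abbasi2011improved}, and then take a union bound over $i \in [k]$. Fix $i$. Write the observed loss as $\ell_{t,i}(\bs_t) = \tri{\phi_i(\bs_t|_{\cN(i)}), \btheta^*_i} + \eta_{t,i}$, where $\eta_{t,i}$ is conditionally $\sigma$-sub-Gaussian with respect to the filtration $\cF_{t}$ generated by $\bs_1,\dots,\bs_{t+1}$ and the past losses. Predictability of $\bs_t$ holds because the active state is computed only from data observed before round $t$. Put $x_t = \phi_i(\bs_t)$ and $S_{t} = \sum_{\tau \le t} x_\tau \eta_{\tau,i}$. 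The ridge solution then decomposes as
\[
\h\btheta_{t,i} - \btheta^*_i = \bV_{t,i}^{-1} S_t - \lambda_{\mathrm{reg}} \bV_{t,i}^{-1}\btheta^*_i .
\]
Taking the $\bV_{t,i}$-norm and using $\bV_{t,i} \succeq \lambda_{\mathrm{reg}}\bI$ gives
\[
\|\h\btheta_{t,i} - \btheta^*_i\|_{\bV_{t,i}} \le \|S_t\|_{\bV_{t,i}^{-1}} + \lambda_{\mathrm{reg}}^{1/2}\|\btheta^*_i\|_2 .
\]
The second term is exactly the bias part of $\beta_{t,i}$.

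For the martingale term, I would invoke the self-normalized bound (Theorem 1 of \citet{abbasi2011improved}), which is proved by the method of mixtures with a Gaussian prior $\mathcal N(0,\lambda_{\mathrm{reg}}^{-1}\bI)$ and a stopping-time argument so that it holds uniformly in $t$. With probability $1-\delta'$, simultaneously for all $t \ge 0$,
\[
\|S_t\|_{\bV_{t,i}^{-1}}^2 \le 2\sigma^2 \log\!\left(\frac{\det(\bV_{t,i})^{1/2}\det(\lambda_{\mathrm{reg}}\bI)^{-1/2}}{\delta'}\right).
\]
Next, I would apply the trace–determinant inequality used in the proof of Lemma~\ref{lemma:number-epochs}, together with $\|x_\tau\|_2 \le L$, to get $\det(\bV_{t,i})/\lambda_{\mathrm{reg}}^{d_i} \le (1 + tL^2/(d_i\lambda_{\mathrm{reg}}))^{d_i}$. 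Substituting this yields
\[
\|S_t\|_{\bV_{t,i}^{-1}} \le \sigma\sqrt{d_i \log(1 + tL^2/\lambda_{\mathrm{reg}}) + 2\log(1/\delta')} .
\]
Since $\delta' < 1$, this is at most $\sigma\sqrt{d_i\log((1+tL^2/\lambda_{\mathrm{reg}})/\delta')}$ up to a constant in front of the $\log(1/\delta')$ term. That constant is absorbed either by the factor $d_i \ge 1$ or by adjusting $\delta'$; I would state the precise constant accounting explicitly.

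Finally, I would set $\delta' = \delta/k$ and take a union bound over the $k$ criteria to get the claim for all $i$ and all $t$ with probability $1-\delta$. The replacement of $\log(k/\delta)$ by $\log(1/\delta)$ costs only a $\log k$ factor inside the root, which the stated form hides, or which can be absorbed in the $\wt O$ uses downstream.

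I expect the main obstacle to be bookkeeping rather than any new idea. There are two parts to it. First, the filtration must be set up carefully so that $x_t$ is $\cF_{t-1}$-measurable even in the lazy and asynchronous variants, where the action depends on stale statistics; this still holds since those statistics are past data. Second, the constants must be matched to the exact displayed form of $\beta_{t,i}$, namely the factor $2$ on $\log(1/\delta)$ and the $\log k$ from the union bound. If exact matching fails, I would state the lemma with $\delta$ replaced by $\delta/k$ and a factor of $2$, noting that this changes nothing downstream.
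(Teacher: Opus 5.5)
Your argument is the standard one and matches what the paper relies on. The paper gives no proof of its own: it imports Theorem~2 of \citet{abbasi2011improved}, which consists of exactly your bias/noise split of the ridge error, the anytime self-normalized bound via the method of mixtures, and the determinant--trace inequality. In the main text it then takes a union bound over the criteria. Because you use the anytime bound, you need a union only over $i\in[k]$, which is cleaner than the main text's ``union bound over all $i$ and $t$''.

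The part to fix is the constant accounting, where neither of your proposed absorptions works. Your derivation gives $\|S_t\|_{\bV_{t,i}^{-1}} \le \sigma\sqrt{d_i\log(1+tL^2/\lambda_{\mathrm{reg}}) + 2\log(k/\delta)}$. This is dominated by the displayed $\beta_{t,i}$ exactly when $d_i\log(1/\delta) \ge 2\log(k/\delta)$, and for $d_i = 1$ that never holds. Raising $\delta'$ to make it fit would push the per-criterion failure probability above $\delta/k$.

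Nor does the displayed form ``hide'' the $\log k$: without it the statement is false. Take $d_i = 1$, $\phi_i \equiv 1$, $\btheta^*_i = 0$, $\lambda_{\mathrm{reg}} = L = 1$, and independent $\mathcal{N}(0,\sigma^2)$ noise. At $t=1$ one has $\|\h\btheta_{1,i} - \btheta^*_i\|_{\bV_{1,i}} = |\eta_{1,i}|/\sqrt{2}$ and $\beta_{1,i} = \sigma\sqrt{\log(2/\delta)}$. So criterion $i$ violates the bound with probability $\Pr(|Z| > \sqrt{2\log(2/\delta)})$, for $Z$ standard normal. For $\delta = 0.05$ this is about $0.0066$, and with $k = 10$ criteria some violation occurs with probability about $0.064 > \delta$.

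So do not try to match the displayed constants; commit to your fallback,
$\beta_{t,i} = \sigma\sqrt{d_i\log(1+tL^2/\lambda_{\mathrm{reg}}) + 2\log(k/\delta)} + \lambda_{\mathrm{reg}}^{1/2}\|\btheta^*_i\|_2$.
This is still $\wt O(\sqrt{d_i})$, which is all the downstream theorems use.
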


\subsection{Supporting Lemma for Assumption~A}
\label{app:assumption_a_justification}

\begin{lemma}[Justification of Assumption~A]
  \label{lemma:assumption-a-justified}
  Let $\phi_i \colon \cS \to \Rset^{d_i}$ be a feature map that is $G$-Lipschitz
  and twice continuously differentiable on the compact domain $\cS =
  [0,1]^k$. Let
  $f_t(\bs) = \sum_{\tau=1}^t \langle \phi_i(\bs), \h \btheta_{t,i} \rangle -
  \beta_{t,i} \|\phi_i(\bs)\|_{\bV_{t,i}^{-1}}$ be the LCB objective for
  criterion $i$ at time $t$. Define the \emph{curvature bound}
  \[
    C(\phi_i) = \sup_{\bs \in \cS} \left\| \nabla^2_{\bs}
      \|\phi_i(\bs)\|_{\bV_{t,i}^{-1}} \right\|_{\mathrm{op}}.
  \]
  For any smooth, bounded $\phi_i$, $C(\phi_i) < \infty$ is finite and depends
  only on the Lipschitz constant of $\phi_i$, the diameter of $\cS$, and the
  initialization scale $\lambda_{\mathrm{reg}}$.  Whenever
  $\lambda_{\mathrm{reg}} > \beta_{t,i} \cdot C(\phi_i)$, the global LCB
  objective $\mathrm{LCB}_t(\bs) = \sum_i f_t(\bs)$ is
  $(\lambda_{\mathrm{reg}} - \beta_{t,i} C(\phi_i))$-strongly convex in
  $\bs$. In particular, it satisfies the Polyak-{\L}ojasiewicz (PL) condition
  with parameter
  $\mu_{\mathrm{PL}} = 2(\lambda_{\mathrm{reg}} - \beta_{t,i} C(\phi_i)) > 0$,
  ensuring that BCD converges to the global minimizer (Assumption~A).
\end{lemma}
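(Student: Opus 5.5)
The plan is to write the LCB objective as a smooth ``mean'' part plus a concave perturbation coming from the exploration bonus. We then lower-bound its Hessian uniformly on $\cS$ by a multiple of the identity, and finally pass from strong convexity to the PL inequality and to convergence of block coordinate descent. Concretely, for each criterion $i$ we write $f_t(\bs) = g_{t,i}(\bs) - \beta_{t,i} h_{t,i}(\bs)$, where $h_{t,i}(\bs) = \|\phi_i(\bs)\|_{\bV_{t,i}^{-1}}$. Following the interpretation given in the main text (``the regularizer contributes $\lambda_{\mathrm{reg}}\|\bs\|^2/2$ per coordinate''), the mean part $g_{t,i}$ is taken to include this proximal regularization. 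Hence $\nabla^2 g_{t,i} \succeq \lambda_{\mathrm{reg}} \bI$ on $\cS$.

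The first step is to show $C(\phi_i)<\infty$ and make its dependence explicit. Write $h = \sqrt{q}$ with $q(\bs) = \phi_i(\bs)^\top \bV_{t,i}^{-1}\phi_i(\bs)$. The chain rule gives
\[
\nabla^2 h = \frac{\nabla^2 q}{2\sqrt q} - \frac{\nabla q\,\nabla q^\top}{4 q^{3/2}},
\]
where $\nabla q = 2 J^\top \bV_{t,i}^{-1}\phi_i$ and $\nabla^2 q = 2J^\top \bV_{t,i}^{-1} J + 2\sum_r (\bV_{t,i}^{-1}\phi_i)_r \nabla^2\phi_{i,r}$, with $J$ the Jacobian of $\phi_i$. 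Using $\|J\|_{\mathrm{op}}\le G$, $\bV_{t,i}^{-1}\preceq \lambda_{\mathrm{reg}}^{-1}\bI$, $\|\phi_i\|\le L$, and compactness of $\cS$ (so the second derivatives of $\phi_i$ are bounded), each term is bounded. This gives a finite $C(\phi_i)$ depending only on $G$, $L$, the $C^2$ norm of $\phi_i$ on $\cS$, and $\lambda_{\mathrm{reg}}$.

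The only delicate point is the factor $q^{-1/2}$ near $q=0$. I would handle it either by noting that a constant coordinate in $\phi_i$ (as in both examples of Section~\ref{sec:examples}) gives $q \ge c/\mathrm{tr}(\bV_{t,i}) > 0$ uniformly on $\cS$, or else by replacing $h$ with the smoothed $\sqrt{q+\epsilon}$, which only changes the LCB by $O(\sqrt\epsilon)$. This step is the main technical obstacle. Closely tied to it is the requirement that the mean part genuinely carries the $\lambda_{\mathrm{reg}}$-curvature: without the proximal term, $\langle\phi_i(\bs),\h\btheta_{t,i}\rangle$ alone need not be convex. I would state this explicitly as the standing convention of the regularity regime rather than try to derive it.

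With these bounds in hand, Weyl's inequality gives
\[
\nabla^2 f_t(\bs) \succeq \lambda_{\mathrm{reg}}\bI - \beta_{t,i}\,\|\nabla^2 h_{t,i}(\bs)\|_{\mathrm{op}}\,\bI \succeq (\lambda_{\mathrm{reg}} - \beta_{t,i}C(\phi_i))\bI
\]
for all $\bs\in\cS$. Summing over $i$ (each criterion's regularization acts on its own coordinates, and the weakest constant dominates) shows that $\mathrm{LCB}_t$ is $\mu$-strongly convex with $\mu = \lambda_{\mathrm{reg}}-\beta_{t,i}C(\phi_i) > 0$ under the stated hypothesis. The standard implication that $\mu$-strong convexity gives $\tfrac12\|\nabla f\|^2 \ge \mu (f-f^*)$ yields the PL inequality with the stated parameter $\mu_{\mathrm{PL}}=2\mu$, in the convention $\|\nabla f\|^2\ge \mu_{\mathrm{PL}}(f-f^*)$. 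On the box $\cS$ one uses the projected-gradient version of PL.

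Finally, since the LCB is also smooth (its Hessian is bounded above by the same kind of $C^2$ estimates), I would invoke the classical linear-convergence result for (projected) block coordinate descent on smooth, strongly convex objectives over product constraint sets (Luo--Tseng; Beck--Tetruashvili). This shows that the local neighborhood updates converge linearly to the unique global minimizer, which is exactly Assumption~A.
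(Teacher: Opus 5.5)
Your outline follows the paper's proof step for step. You credit the mean part with $\lambda_{\mathrm{reg}}\bI$ curvature, bound the Hessian of $\|\phi_i(\bs)\|_{\bV_{t,i}^{-1}}$ by the chain rule and compactness, combine the two into strong convexity, and then pass to PL with $\mu_{\mathrm{PL}}=2\mu$ and to BCD convergence. In several places you are more careful than the paper. You catch the $q^{-1/2}$ singularity; without it, $C(\phi_i)<\infty$ fails whenever $\phi_i$ vanishes inside $\cS$. You also combine criteria through the minimum constant and use projected PL on the box.

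The step you propose to adopt as a ``standing convention'' is a genuine gap. It is exactly the step the paper asserts without justification, namely that the ridge term contributes $\frac{\lambda_{\mathrm{reg}}}{2}\|\bs\|^2$ ``implicitly encoded through $\bV_{t,i}$''. The ridge penalty $\lambda_{\mathrm{reg}}\|\btheta\|_2^2$ lives in parameter space. It enters $\mathrm{LCB}_t$ only through $\h\btheta_{t,i}$ and $\bV_{t,i}$, and it creates no curvature in $\bs$. Without that term the conclusion is false. Take $k=1$ and $\phi(s)=[1,s]^\top$. The mean part is affine, and $h(s)=\|\phi(s)\|_{\bV^{-1}}$ satisfies $h''(s)=\det(\bV^{-1})/h(s)^3>0$, so $\mathrm{LCB}_t$ is strictly concave for every $\lambda_{\mathrm{reg}}$. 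Concretely, at $t=0$ we have $\mathrm{LCB}_0(s)=-\beta_0\sqrt{(1+s^2)/\lambda_{\mathrm{reg}}}$, while $\beta_0 C(\phi)=\beta_0\lambda_{\mathrm{reg}}^{-1/2}<\lambda_{\mathrm{reg}}$ once $\lambda_{\mathrm{reg}}$ is large. So the hypothesis holds and strong convexity fails. What your argument actually proves is the lemma for the surrogate $\mathrm{LCB}_t(\bs)+\frac{\lambda_{\mathrm{reg}}}{2}\|\bs\|^2$, which the algorithm does not minimize. Minimizing that surrogate instead would add a bias of up to $\lambda_{\mathrm{reg}}k/2$ per round to the optimism step, so it cannot be added for free in the large-$\lambda_{\mathrm{reg}}$ regime.

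Separately, you claim that $C(\phi_i)$ depends only on $G$, $L$, the $C^2$ norm of $\phi_i$ and $\lambda_{\mathrm{reg}}$. Your own fix of the singularity contradicts this, since it brings in $\mathrm{tr}(\bV_{t,i})$, or $\epsilon^{-1/2}$ after smoothing. The $t$-dependence is real. In the same example, after $s=0$ has been played $n$ times, $\bV=\mathrm{diag}(\lambda_{\mathrm{reg}}+n,\lambda_{\mathrm{reg}})$ and $h''(0)=\sqrt{\lambda_{\mathrm{reg}}+n}/\lambda_{\mathrm{reg}}$, so $C(\phi_i)$ grows like $\sqrt{t}$. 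Hence, even granting the proximal term, no fixed $\lambda_{\mathrm{reg}}$ satisfies $\lambda_{\mathrm{reg}}>\beta_{t,i}C(\phi_i)$ for all $t$; the threshold must grow with the horizon. The paper's claim of a bound holding ``uniformly over all $t$'' is incorrect for the same reason.
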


\begin{proof}
  The LCB objective for criterion $i$ splits as:
  \[
    f_t(\bs) = \underbrace{\langle \phi_i(\bs), \h\btheta_{t,i}
      \rangle}_{\text{linear in } \phi_i} - \underbrace{\beta_{t,i}
      \|\phi_i(\bs)\|_{\bV_{t,i}^{-1}}}_{\text{exploration bonus}}.
  \]
  The first term is linear in $\phi_i(\bs)$ and the second is a square-root form
  composed with the feature map.

  \textbf{Hessian of the regularizer.}  The ridge regularizer contributes a term
  $\frac{\lambda_{\mathrm{reg}}}{2}\|\bs\|^2$ to the loss that is implicitly
  encoded through $\bV_{t,i}$. Its Hessian satisfies
  $\nabla^2(\frac{\lambda_{\mathrm{reg}}}{2}\|\bs\|^2) = \lambda_{\mathrm{reg}}
  \bI \succ 0$.

  \textbf{Hessian of the exploration bonus.}  Let
  $g(\bs) = \|\phi_i(\bs)\|_{\bV_{t,i}^{-1}} = (\phi_i(\bs)^\top \bV_{t,i}^{-1}
  \phi_i(\bs))^{1/2}$.  By the chain rule and the compactness of $\cS$, there
  exists a constant $C(\phi_i) < \infty$ such that
  $\|\nabla^2_{\bs} g(\bs)\|_{\mathrm{op}} \le C(\phi_i)$ uniformly over
  $\bs \in \cS$ and all $t$.  This bound follows since $\phi_i$ is
  $G$-Lipschitz: the Hessian of $g$ involves at most second-order derivatives of
  $\phi_i$, which are bounded on the compact domain.

  \textbf{Strong convexity.}  By the above, the Hessian of $\mathrm{LCB}_t$
  satisfies:
  \[
    \nabla^2 \mathrm{LCB}_t(\bs) \succeq \lambda_{\mathrm{reg}} \bI -
    \beta_{t,i} C(\phi_i) \bI = (\lambda_{\mathrm{reg}} - \beta_{t,i} C(\phi_i))
    \bI.
  \]
  When $\lambda_{\mathrm{reg}} > \beta_{t,i} C(\phi_i)$, this is strictly
  positive definite, so the objective is strongly convex and thus satisfies the
  PL condition with
  $\mu_{\mathrm{PL}} = 2(\lambda_{\mathrm{reg}} - \beta_{t,i} C(\phi_i))$.

  \textbf{Finite threshold.}  The bound $C(\phi_i)$ is finite for any smooth,
  bounded $\phi_i$. In particular: (i) for the pairwise quadratic basis (Example
  1), $C(\phi_i) = 0$ since $\phi_i$ is quadratic and $g$ has bounded curvature;
  (ii) for RBF kernels via Random Fourier Features (Example 2), $C(\phi_i)$
  scales as the squared bandwidth of the kernel. Thus, a finite threshold
  $\lambda_{\mathrm{reg}}^* = \beta_{T} C(\phi_i)$ always exists and the
  algorithm satisfies Assumption~A whenever
  $\lambda_{\mathrm{reg}} > \lambda_{\mathrm{reg}}^*$.
\end{proof}

\newpage
\section{Full Trigger Analysis}
\label{app:trigger_analysis}

Table~\ref{tab:full_trigger_analysis} provides the complete breakdown of update
counts for the heterogeneous graph experiment in
Section~\ref{sec:exp_hetero_graph}.

\begin{table}[h]
  \centering
  \caption{\textbf{Full Trigger Analysis.} Breakdown of update counts by clique
    stiffness ($\lambda_{\mathrm{reg}}$). In the Global strategy, stable cliques
    (e.g., ID 9) are forced to match the update frequency of volatile ones
    (e.g., ID 4).}
  \label{tab:full_trigger_analysis}
  \vspace{0.2cm}
  \begin{small}
    \begin{tabular}{ccccc}
      \toprule
      \textbf{Clique ID} & \textbf{Stiffness ($\lambda_{\mathrm{reg}}$)} & \textbf{Global Moves} & \textbf{Async Moves} & \textbf{Waste Factor} \\
      \midrule
      4 & 0.19 (Volatile) & 41 & 15 & 2.7$\times$ \\
      2 & 0.25 & 41 & 14 & 2.9$\times$ \\
      5 & 0.67 & 41 & 14 & 2.9$\times$ \\
      6 & 0.86 & 41 & 13 & 3.2$\times$ \\
      1 & 1.06 & 41 & 13 & 3.2$\times$ \\
      7 & 1.11 & 41 & 13 & 3.2$\times$ \\
      8 & 1.26 & 41 & 13 & 3.2$\times$ \\
      3 & 1.28 & 41 & 14 & 2.9$\times$ \\
      9 & 1.40 (Stable) & 41 & 11 & \textbf{3.7$\times$} \\
      0 & 1.61 (Stable) & 41 & 11 & \textbf{3.7$\times$} \\
      \midrule
      \textbf{Total} & - & \textbf{410} & \textbf{131} & \textbf{3.1$\times$} \\
      \bottomrule
    \end{tabular}
  \end{small}
\end{table}

\end{document}